\documentclass{article}
 % \PassOptionsToPackage{round}{natbib}
% if you need to pass options to natbib, use, e.g.:
%     \PassOptionsToPackage{numbers, compress}{natbib}
% before loading neurips_2024

% ready for submission
\usepackage[final]{neurips_2024}

% to compile a preprint version, e.g., for submission to arXiv, add add the
% [preprint] option:
% \usepackage[preprint]{neurips_2024}

% to compile a camera-ready version, add the [final] option, e.g.:
%     \usepackage[final]{neurips_2024}

% to avoid loading the natbib package, add option nonatbib:

\usepackage[utf8]{inputenc} % allow utf-8 input
\usepackage[T1]{fontenc}    % use 8-bit T1 fonts
\usepackage{hyperref}       % hyperlinks
\usepackage{url}            % simple URL typesetting
\usepackage{booktabs}       % professional-quality tables
\usepackage{amsfonts}       % blackboard math symbols
\usepackage{nicefrac}       % compact symbols for 1/2, etc.
\usepackage{microtype}      % microtypography
\usepackage{xcolor}         % colors

\usepackage{amssymb}            % Defines common symbols like \mathbb R
\usepackage{mathtools}          % Extends amsmath, providing common math tools
\usepackage{mathrsfs}           % Enables \mathscr, which can work in cases that \mathcal does not
\usepackage{graphicx}           % For including images
\usepackage{subcaption}         % Allows for the use of subfigures and subcaptions
\usepackage[space]{grffile}     % For spaces in image names
\usepackage{url} 
\usepackage{wrapfig}
\usepackage[linesnumbered,ruled,vlined]{algorithm2e}
\usepackage{xcolor}
\usepackage{amsmath}
\usepackage{amssymb}
\usepackage{mathtools}
\usepackage{amsthm}
\usepackage{thm-restate}
\usepackage{makecell}
\usepackage{colortbl}
\usepackage{placeins}
\usepackage{array}
\usepackage{lineno}

% if you use cleveref..
\usepackage[capitalize,noabbrev]{cleveref}
\crefname{equation}{Eq.}{Eqs.}
\crefname{appendix}{App.}{}
\crefname{figure}{Fig.}{Figs.}
\crefname{section}{Sec.}{}
%%%%%%%%%%%%%%%%%%%%%%%%%%%%%%%%
% THEOREMS
%%%%%%%%%%%%%%%%%%%%%%%%%%%%%%%%
\theoremstyle{plain}
\newtheorem{theorem}{Theorem}[section]

\newtheorem{lemma}[theorem]{Lemma}

\theoremstyle{definition}
\newtheorem{definition}[theorem]{Definition}

\theoremstyle{remark}

% \input{symbols}

%!TEX root =  main.tex
\newcommand{\E}{\mathbb{E}}

\newcommand{\R}{\mathbb{R}}

\newcommand{\cA}{\mathcal{A}}

\newcommand{\cD}{\mathcal{D}}

\newcommand{\cN}{\mathcal{N}}

\newcommand{\cP}{\mathcal{P}}

\newcommand{\cS}{\mathcal{S}}

\renewcommand{\epsilon}{\varepsilon}

\theoremstyle{plain}

% Graph

\def\gL{{\mathcal{L}}}

% Sets

% Don't use a set called E, because this would be the same as our symbol
% for expectation.

\def\sR{{\mathbb{R}}}

\newcommand{\Alg}{\mathcal{A}}

\def\Alg/{{CBP}}

\def\RR/{\texttt{RoundRobin}}
\def\Ours/{\texttt{GVFExplorer}}
\def\Mix/{\texttt{MixturePolicy}}
\def\Uni/{\texttt{UniformPolicy}}
\def\SR/{\texttt{SR}}
\def\BPS/{\texttt{BPS}}

\usepackage{xspace}

\newcommand{\Seta}{\textbf{Two Distinct Policies \& Identical Cumulants}\xspace}
\newcommand{\Setb}{\textbf{Two Distinct Policies \& Distinct Cumulants}\xspace}
\newcommand{\Setc}{\textbf{Large Scale Evaluation with 40 Distinct GVFs}\xspace}
\newcommand{\Setd}{\textbf{Non-Linear Function Approximation}\xspace}
\newcommand{\Sete}{\textbf{Non-Stationary Cumulant in FourRooms}\xspace}

\newcommand{\blue}[1]{\textcolor{blue}{#1}}

% \def\cRR/{\blue{RoundRobin}}
% \def\cOurs/{\red{GVFExplorer}}
% \def\cCRPO/{\teal{CRPO}}

%%%% commands for gradients

% state action commands

\newcommand{\stp}{s_{t+1}}

\newcommand{\tar}{\pi}
\newcommand{\beh}{\mu}

% matrix notation
\newcommand{\matr}[1]{\mathbf{#1}} 

% commands
\newcommand{\sr}{\textit{sr}}
%-------------

\title{Adaptive Exploration for Data-Efficient General Value Function Evaluations}

\author{Arushi Jain  \\
    arushi.jain@mail.mcgill.ca \\
    McGill University\\
    Mila
    \And
    Josiah P. Hanna \\
    jphanna@cs.wisc.edu\\
    The University of Wisconsin – Madison
    \And
    Doina Precup\\
    dprecup@cs.mcgill.ca \\
    McGill University\\
    Mila
    }

\begin{document}

\maketitle

\begin{abstract}
General Value Functions (GVFs) (Sutton et al., 2011) represent predictive knowledge in reinforcement learning. Each GVF computes the expected return for a given policy, based on a unique reward. Existing methods relying on fixed behavior policies or pre-collected data often face data efficiency issues when learning multiple GVFs in parallel using off-policy methods. To address this, we introduce \Ours/, which adaptively learns a single behavior policy that efficiently collects data for evaluating multiple GVFs in parallel. Our method optimizes the behavior policy by minimizing the total variance in return across GVFs, thereby reducing the required environmental interactions. We use an existing temporal-difference-style variance estimator to approximate the return variance. We prove that each behavior policy update decreases the overall mean squared error in GVF predictions. We empirically show our method's performance in tabular and nonlinear function approximation settings, including Mujoco environments, with stationary and non-stationary reward signals, optimizing data usage and reducing prediction errors across multiple GVFs.
\end{abstract}

\section{Introduction}
\label{sec:Introduction}
The ability to make multiple predictions is a key attribute of human, animal, and artificial intelligence. \cite{sutton2011horde} introduced \textbf{General Value Functions (GVFs)} which consists of several independent sub-agents, each responsible for answering specific predictive knowledge about the environment. Each GVF consists of a unique - policy, custom reward function called \textit{cumulant} and state-dependent discount factor - to calculate the cumulative discounted cumulant.  For example, a GVF can predict the expected number of times an agent will bump into the wall under a given policy \citep{white2015developing,schlegel2021general,sherstan2020representation}. In essence, GVFs generalizes the standard value function to address a wider range of predictive questions, making them a powerful tool for intelligent systems.

Prior works have used either a fixed randomized behavior policy~\citep{sutton2011horde} or pre-collected datasets~\citep{xu2022unifying} to update all GVFs in parallel using \textit{off-policy learning}. However, these methods can result in large value estimation errors if the behavior policy significantly diverges from the GVF policies. 
% This mismatch can lead to inadequate exploration of critical regions for learning GVFs. 
Our work addresses this gap by focusing on the question of exploration for evaluating GVFs: \textit{how can we adapt an agent's behavior policy to data-efficiently sample for evaluating multiple GVFs in parallel?} While exploration has been extensively studied in context of optimal control in Markov Decision Processes (MDP), the question of constructing a policy that can learn multiple quantities in parallel has remained largely untouched.
% 
% sufficiently the areas of state space in which the cumulant has values that are important for the accurate estimation of some particular GVF.
% However, off-poliy learning can suffer from large estimation errors, due to high variance introduced by importance sampling ratios, if the behavior policy or dataset is ill-suited for the particular GVFs. Here, by ``ill-suited" we mean that, for example, the behavior policy is very different from the policies of the GVFs, or that the behavior policy does not visit sufficiently the areas of state space in which the cumulant has values that are important for the accurate estimation of some particular GVF. 
% 

% Our work addresses the question of {\em exploration for GVF evaluation}: what should the agent's behavior policy be in order to estimate efficiently a set of GVFs? 
% 

To accurately evaluate multiple GVFs in parallel, we aim to design a behavior policy that minimizes the overall \textit{mean squared error (MSE)} in their predictions. A natural approach might involve following each GVF’s target policy for some period of time (e.g. one episode) in a round-robin manner while concurrently updating all GVFs off-policy. However, this approach can be highly data inefficient as actions are sampled according to given policies, potentially overlooking actions from states where expected return is uncertain.
To achieve better value estimation in fewer samples, it is essential to focus on state-action pairs with high variance in return, as these pairs would exhibit greater uncertainty in their mean return. Therefore, a behavior policy should visit such pairs more frequently to offset higher variance in return. Consider an analogy of two-arm bandit problem: fewer samples are needed for a constant reward arm for accurate value estimation, whereas an arm with a variable reward demands a greater number of samples to achieve the same level of certainty. We empirically support this claim by comparing round-robin and our proposed approach later.

With this motivation, we introduce \Ours/, that adaptively learns a behavior policy which minimize the total MSE across all GVF predictions. \Ours/ leverages the existing off-policy temporal difference (TD) based estimator of variance in return distribution \citep{sherstan2018comparing,jain2021variance} to guide the behavior policy. The strategy is to frequently take actions that might have more unpredictable outcomes (high variance in return). By sampling them more, agent can estimate the mean return better with fewer interactions, thus effectively lowering the overall MSE in the GVF predictions.

% \begin{linenumber}
\Ours/ optimizes the data usage and reduces the prediction error, offering a scalable solution for complex environments. This is particularly valuable for real-world applications like personalized recommender systems \citep{parapar2021diverse,tang2015personalized}, where it can enable efficient evaluation of personalized policies based on diverse user preferences (reward functions) \citep{li2024personalized}, leveraging shared knowledge for improved accuracy.

\paragraph{Contributions:} (1)We design an adaptive behavior policy that enables accurate and efficient learning of multiple GVF predictions in parallel [\cref{algo:behavior_update}]. (2) We derive an iterative behavior update rule that directly minimizes the overall prediction error [\cref{thm:behavior_update}]. (3) We prove in tabular setting that each iterative update to the behavior policy causes the total MSE across GVFs to be less than equal to one from the old policy [\cref{thm:behavior_improvement}]. (4) We establish the existence of a variance operator that enables us to use TD-based variance estimation  [\cref{lem:P_inverse_exists}]. (5) We empirically demonstrate in both tabular and Mujoco environments that \Ours/ lowers the total MSE when estimating multiple GVFs compared to baseline approaches and enables evaluating a larger number of GVFs in parallel.

%not only exhibits lower averaged MSE against several baselines like \SR/ \citep{mcleod2021continual}, \BPS/ \citep{hanna2017data}, \RR/, \Mix/, but also demonstrates scalability by efficiently managing a higher count of GVFs in tabular environments and adapting to continuous state spaces with nonlinear function approximations.

\section{Related Work}
Exploration in reinforcement learning (RL) has predominantly focused on improving policy performance for a single objective~\citep{oudeyer2007intrinsic,schmidhuber2010formal,jaderberg2016reinforcement,machado2017eigenoption,eysenbach2018diversity,burda2018exploration,guo2022byol}. Refer to~\cite{ladosz2022exploration} for a detailed survey on exploration techniques in RL. While related to exploration, these works differ from ours, as they concentrate on optimizing policies for single objective rather than evaluating multiple GVFs (policy-cumulant pair) simultaneously.

Our work is most closely related to other works on learning multiple GVFs. \cite{xu2022unifying} address a similar problem by evaluating multiple GVFs using an offline dataset, but our method operates online, avoiding the data coverage limitations of offline approaches. \cite{linke2020adapting} develops exploration strategies for GVFs in a stateless bandit context, which does not deal with the off-policy learning or function approximation challenges present in the full Markov Decision Process (MDP) context. In a single bandit problem, \cite{antos2008active,carpentier2015adaptive}, show that the optimal data collection strategy to estimate mean rewards of arms is to sample proportional to each arm's variance in reward. Prior works like  \cite{hanna2017data} learned a behavior policy for a {\em single} policy evaluation problem using a REINFORCE-style~\citep{williams1992simple} variance-based method called \BPS/. This idea extends on the similar principles of using Importance Sampling in Monte Carlo simulations for finding optimal sampling policy based on variance minimization ~\citep{owen2013monte, frank2008reinforcement}. \citet{metelli2023relation}  extends this idea to the control setting. However, these methods are limited to single-task evaluation or control. Evaluating multiple policies simultaneously is more complex, requiring careful balance in action selection among interrelated learning problems. Perhaps the closest work to ours is by  \cite{mcleod2021continual}, which uses the changes in the weights of Successor Representation (SR)~\citep{dayan1993improving} as an intrinsic reward to learn a behavior policy that supports multiple predictive tasks. \Ours/ approach is simpler, as it directly optimizes the behavior policy to minimize the total prediction error over GVFs, resulting in an intuitive variance-proportional sampling algorithm. We will compare the two approaches empirically as well.

\section{Preliminaries}
\label{sec:preliminary}
Consider an agent interacting with the environment to obtain estimates of $\cN$ different \textit{General Value Function} (GVF)~\citep{sutton2011horde}. We assume an episodic, discounted Markov decision process (MDP) where $\cS$ is the set of states, $\cA$ is the action set, $\cP: \cS \times \cA \rightarrow \Delta_\cS$ is the transition probability function, $\Delta_\cS$ is the $|\cS|$-dimensional probability simplex, and $\gamma \in [0, 1)$ is the discount factor.

%GVFs extend the concept of traditional value functions to represent arbitrary signals beyond the reward. For $i=\{1, \dots, \cN\}$, e
Each GVF is conditioned on a fixed policy $\pi_i:\cS \to \Delta_{\cA}$, $i=\{1, \dots, \cN\}$  and has a  cumulant $c_i: \cS \times \cA \rightarrow \R$.  For simplicity, we assume that all cumulants are scalar, and that the GVFs share the environment
discount factor $\gamma$. This eases the exposition, but our results can be extended  to general multidimensional cumulants and state dependent discount factor. Each GVF is a value function $V_{\pi_i}(s)=\mathbb{E}_{\pi_i, \cP} [G_{t}^i | s_t=s]$ where $G_{t}^i = c_{i,t} + \gamma G_{t+1}^i$. 
Each GVF can be viewed as answering the question, ``what is the expected discounted sum of $c_i$ received while following $\pi_i$?''
We can also define  action-value GVFs: $Q_{\pi_i}(s,a) = c_i(s,a)+\gamma\E_{s'\sim\cP(\cdot|s,a)}[V_{\pi_i}(s')]$, with $V_{\pi_i}(s) = \E_{a\sim \pi_i(\cdot|s)}[Q_{\pi_i}(s,a)]$.

% \paragraph{Problem Objective.} At every time step $t$, the agent will produce an approximate $\hat V_i$ corresponding to an estimate of $V_i$ for each  $i$ GVF. The agent's objective is to minimize the sum of \textbf{Mean Squared Error (MSE)} under some state weighting $d(s)$,
% \vspace{-2ex}
% \begin{align}\label{eq:mse}
%   \textiit{MSE}(V, \hat V) = \sum_{i=1}^{\cN}\sum_{s \in \cS} d(s)\Big(V_i(s) - \hat V_i(s) \Big)^2.  
% \end{align}
% % \vspace{-0.5ex}
% In our experiments, we use uniform distribution for $d(s)$ to predict all GVFs uniformly well. In principle, this objective could be generalized to prioritize certain GVFs by using weighted MSE.

At each time step $t$, the agent in state $s_t$, takes an action $a_t$ and receives cumulant values $c_{i,t}$ for all $i \in \{1, \dots, \cN\}$,  transitioning to a new state $s_{t+1}$. This repeats until reaching a terminal state or a maximum step count. Then the agent resets to a new initial state and starts again. The agent interacts with environment using a behavior policy, $\mu:\cS \to \Delta_{\cA}$.
The goal is to approximate  values  $\hat V_i$  corresponding to the true GVFs value $V_{\pi_i}$. 
We formalize the objective as {\bf minimizing the Mean Squared Error (MSE)} under some state weighting $d(s)$ for all GVFs:
\vspace{-1ex}
\begin{align}\label{eq:mse}
  \textit{MSE}(V, \hat V) = \sum_{i=1}^{\cN}\sum_{s \in \cS} d(s)\Big(V_{\pi_i}(s) - \hat V_i(s) \Big)^2.  
\end{align}
In our experiments, we use the uniform distribution for $d(s)$. This objective can be generalized to prioritize certain GVFs using a weighted MSE.

% We focus on the case where the agent wants to predict all GVFs uniformly well. In principle, this objective could be generalized to emphasize learning some GVFs more than others by using weighted MSE. 
% In the next section, we will discuss the problem of finding the optimal behavior policy that helps in gathering data that can efficiently evaluate multiple GVFs in parallel in an off-policy setting.

\paragraph{Importance Sampling (IS).} To estimate multiple GVFs with distinct target policies $\pi_i$ in parallel, off-policy learning is essential. Importance sampling (IS) is one of the primary tools for off-policy value learning \citep{hesterberg1988advances,precup2000eligibility,rubinstein2016simulation}, allowing estimation of value function under target policy $\pi$ using samples from different behavior policy $\mu$. In context of off-policy Temporal Difference (TD) learning \citep{sutton2018reinforcement}, IS ratio, $\rho_t = \frac{\pi(a_t|s_t)}{\mu(a_t|s_t)}$, is used to adjust the updates to ensure \textit{unbiased value estimates}. The update rule is given as $
\hat Q(s_t,a_t) = \hat Q(s_t,a_t) + \alpha  \left( c_t + \gamma \rho_{t+1}\hat Q(s_{t+1},a_{t+1}) - \hat Q(s_t,a_t)\right),$
where $\alpha$ is the learning rate. This update rule ensures that estimated value function $\hat Q$ converges to correct value $Q_{\pi}$ under policy $\pi$, despite the samples being generated from a behavior policy $\mu$.

\section{Behavior Policy Optimization}
\label{sec:LagrangianProblem}
%In this section, we address the problem of accurately estimating multiple GVFs, each associated with a unique fixed target policy and cumulant. The agent follows a \textit{behavior policy} for data generation and uses unbiased Importance Sampling (IS) for parallel off-policy estimation of all GVF values. Our primary goal is to optimize this behavior policy to efficiently gather data that reduces 
As described in the previous section, the goal of the agent is to minimize the total mean squared error (MSE) across the given GVFs (\cref{eq:mse}). Note that $\text{MSE} = \text{Variance} + \text{Bias}^2$. For the algorithm's derivation, we will use \textbf{unbiased} IS estimation for off-policy correction, which shifts the task of minimizing MSE to reducing the total variance across GVFs. \textit{Thus, the core problem is to design a behavior policy that collects data to \textbf{minimize the variance in return} across GVFs and accurately estimate multiple GVF value functions}.

The problem of estimating a \textit{single target policy's} value $V_\pi$ is well studied in the literature. \citet{kahn1953methods,owen2013monte} uses IS through \textit{variance minimization} to find the optimal behavior policy $\mu^{*}$. \citet{owen2013monte} also showed that $\mu^*$ can lead to improved performance by using off-policy value estimation over direct sampling from $\pi$, with the improvement directly related to the degree of variance reduction. However, their analytical solution of obtaining $\mu^{*}$ depends on an unknown quantity $V_\pi$, making it impractical. Nonetheless, the insight suggests that minimizing variance may also enhance performance in policy control for multiple GVF scenarios. In this work, however, we focus solely on policy evaluation, laying the foundation for potential future extensions to policy control.
% The problem of finding a minimum-variance behavior policy is well-studied for \textit{single} target policy scenarios, which aims to estimate the target policy value $V_\pi$ using IS through variance reduction in \citet{kahn1953methods,owen2013monte}. However, solution for optimal behavior policy $\mu^{*}$ is infeasible as it depends on the unknown quantity $V_{\pi}$ which actually needs to be estimated. Thus, practically we can not find the analytical solution of $\mu^{*}$.

% \begin{align*}
%   \min_{\mu} \text{Var}_{a \sim \mu}\left( \frac{w_{\pi}(\tau)}{w_{\mu}(\tau)}G(\tau)\right) \implies  w_{\mu^{*}}(\tau) = \dfrac{w_{\pi}(\tau)}{\E_{\tau\sim\pi}[G(\tau)]}G(\tau).
% \end{align*}
% Here, $w_{\pi}(\tau)=\Pi_{t=0}^T \pi(a_t|s_t)$ is probability of trajectory $\tau$ occurring under $\pi$, and, $G(\tau)$ is discounted return of trajectory $\tau$. Unfortunately, the solution for optimal behavior policy $\mu^{*}$ is infeasible as it depends on unknown quantity $\E_{\tau\sim\pi}[G(\tau)]=V_{\pi}$. Though solution for optimal policy exist in the case of single target policy scenario, in practice, we can not find the analytical solution of $\mu^{*}$.

\subsection{Objective Function}
We propose \Ours/ to address the above limitation and extend the problem to accurately estimate multiple GVF values. \Ours/ takes as input the GVF target policies $\pi_{i=\{1, \dots, \cN\}}$, uses off-policy algorithm to sample the data from behavior policy $\mu$ (which is optimized over time) and outputs the GVF estimates $\hat V_{i=\{1, \dots, \cN\}}$. Since our objective is to find a behavior policy that minimizes the variance in return across multiple GVFs, we use existing off-policy TD-style variance estimator \citep{sherstan2018comparing}. This estimator allows to bootstrap the target values and iteratively update the variance function, making the solution scalable to complex domains.

We define the variance function by $\boldsymbol{M_\pi^\mu(s)}$, which measures the \textbf{variance in the return} of target policy $\pi$ in a given state $s$ when actions are sampled under a different behavior policy $\mu$. In-depth insights into the variance operator is explored in \cref{sec:VarianceOperator}. The variance function for a given state and a give state-action pair is defined respectively:
\begin{align*}
 M_\pi^\mu(s) = \text{Var}_{\pi}(G_t | s_t=s, a \sim \mu)   \quad \text{and} \quad M_\pi^\mu(s,a)=\text{Var}_{\pi}(G_t | s_t=s,a_t=a, a' \sim \mu).
\end{align*}

\textbf{Our objective is  to find an optimal behavior policy $\mu^*$ that efficiently collects a single stream of experience to minimize the sum of variances} $M_{\pi_{\{1 \dots \cN\}}}$ under some state distribution $d(s)$, as,
\vspace{-2ex}
\begin{equation}\label{eq:lagrangian}
    \begin{aligned}
        \mu^{*} = \arg\min_{\mu}& \sum_{i=1}^{\cN}\sum_{s} d(s)M_{\pi_i}^\mu(s) \, 
        \quad \text{s.t. } &\mu(a|s) \geq 0 \, \& \, \sum_a \mu(a|s)=1.
    \end{aligned}
\end{equation}
\vspace{-2ex}

We solve the above objective function iteratively. At each iteration  $k$, \Ours/ produces a behavior policy $\mu_k$. The behavior policy interacts with the environment and gathers data. Using this data, any off-policy TD algorithm can be used to iteratively estimate the variance function $M_{\pi}^{\mu_k}$. This variance function is used to update to a better policy $\mu_{k+1}$, continuing the cycle similar to \textit{policy iteration} in standard RL.

Our aim is to iteratively improve behavior policy and decrease  variance functions to estimate the GVF values $V_{\pi_{i=\{1, 2, \dots, \cN\}}}$ with reducing MSE:
\[\mu_0 \xrightarrow{E} M_{\pi_{i=1,2,\dots}}^{\mu_0} \xrightarrow{I} \mu_1 \xrightarrow{E} M_{\pi_{i=1,2,\dots}}^{\mu_1} \dots \xrightarrow{E} \mu_K,\]
where $\xrightarrow{E}$ denotes \textit{variance estimation} and $\xrightarrow{I}$ denotes \textit{behavior policy improvement}. Next, we present \cref{thm:behavior_update} which principally derives the behavior policy update from $\mu_k$ to $\mu_{k+1}$ by solving the objective in \cref{eq:lag}. We demonstrate that the behavior policy update in \cref{eq:behavior_update} minimizes the objective by showing  that $\mu_{k+1}$ is a better policy than $\mu_k$. The policy $\mu_{k+1}$ is considered as good as, or better than $\mu_k$, if it obtains lesser or equal total variance across all GVFs:$\sum_i M_{\pi_i}^{\mu_{k+1}} \leq \sum_i M_{\pi_i}^{\mu_{k}}$. The proof of behavior policy improvement is detailed in \cref{thm:behavior_improvement}.

\subsection{Theoretical Solution} 
\begin{restatable}{theorem}{behaviorupdate}\label{thm:behavior_update}
(\textbf{Behavior Policy Update:})
Given $\cN$ target policies $\tar_i$ for $i \in\{1 \dots \cN \}$, let $k \in \{1, \dots, K\}$ denote the number of updates to the behavior policy $\beh$ and let  $\rho_i(s,a) = \dfrac{\pi_i(a|s)}{\mu(a|s)}$ be the per-step IS weight. Using the variance state-action function $M^{\mu_k}_{\pi_i}(s,a)$, the behavior policy updates as follows:
\vspace{-2ex}
\begin{equation}\label{eq:behavior_update}
    \mu_{k+1}(a|s) = \dfrac{\sqrt{\sum_i \pi_i(a|s)^2 M^{\mu_k}_{\pi_i}(s,a)}}{\sum_{a'} \sqrt{\sum_i \pi_i(a'|s)^2 M^{\mu_k}_{\pi_i}(s,a')}}.
\end{equation}
\vspace{-2ex}
\end{restatable}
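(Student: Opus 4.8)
The plan is to treat \cref{eq:lagrangian} as a per-state constrained optimization and solve it with a Lagrange multiplier for the normalization constraint, after first re-expressing the state-variance objective so that its dependence on $\mu(\cdot\mid s)$ is explicit. The starting point is a variance decomposition. Because off-policy correction uses the per-step weight $\rho_i(s,a)=\pi_i(a\mid s)/\mu(a\mid s)$, the return estimator from state $s$ first samples $a\sim\mu(\cdot\mid s)$ and rescales by $\rho_i(s,a)$. Applying the law of total variance while conditioning on this first action, and using that the estimator is unbiased, I expect to obtain a relation of the form
\begin{equation*}
  M_{\pi_i}^{\mu}(s) \;=\; \sum_{a}\frac{\pi_i(a\mid s)^2}{\mu(a\mid s)}\,\widetilde M_{\pi_i}^{\mu}(s,a)\;-\;V_{\pi_i}(s)^2,
\end{equation*}
where $\widetilde M_{\pi_i}^{\mu}(s,a)$ collects the conditional second-moment contribution of the downstream return from $(s,a)$, and the factor $\mu(a\mid s)\rho_i(s,a)^2=\pi_i(a\mid s)^2/\mu(a\mid s)$ is the only place the current action distribution enters directly.

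The crucial structural observation is that $\mu(a\mid s)$ enters the objective solely through the factor $1/\mu(a\mid s)$, while the remaining quantities ($\widetilde M_{\pi_i}^{\mu}(s,a)$ and $V_{\pi_i}(s)^2$) depend on $\mu$ only through its downstream behaviour. Following the policy-iteration viewpoint set up before the theorem, I would freeze these downstream quantities at the current iterate $\mu_k$ --- i.e.\ replace $\widetilde M_{\pi_i}^{\mu}(s,a)$ by the TD-estimated $M_{\pi_i}^{\mu_k}(s,a)$ --- so that the minimization over $\mu(\cdot\mid s)$ decouples across states into the convex program $\min_{\mu(\cdot\mid s)}\sum_a W(s,a)/\mu(a\mid s)$ subject to $\sum_a\mu(a\mid s)=1$ and $\mu(a\mid s)\ge 0$, where $W(s,a)=\sum_i \pi_i(a\mid s)^2 M_{\pi_i}^{\mu_k}(s,a)\ge 0$.

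To solve this per-state program I would use either Lagrange multipliers or Cauchy--Schwarz. Introducing a multiplier $\lambda_s$ for the normalization constraint and setting the derivative of $\sum_a W(s,a)/\mu(a\mid s)+\lambda_s\big(\sum_a\mu(a\mid s)-1\big)$ with respect to $\mu(a\mid s)$ to zero gives $\mu(a\mid s)\propto\sqrt{W(s,a)}$; normalizing yields exactly \cref{eq:behavior_update}. Equivalently, Cauchy--Schwarz gives $\big(\sum_a\sqrt{W(s,a)}\big)^2\le\big(\sum_a\mu(a\mid s)\big)\big(\sum_a W(s,a)/\mu(a\mid s)\big)$ with equality iff $\mu(a\mid s)\propto\sqrt{W(s,a)}$, which simultaneously identifies the minimizer and its optimal value. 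Nonnegativity of the solution is automatic since $W(s,a)\ge 0$, and convexity of $x\mapsto 1/x$ on $(0,\infty)$ confirms the stationary point is the global minimum.

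I expect the main obstacle to be the first step: pinning down the variance decomposition that isolates the $1/\mu(a\mid s)$ dependence and correctly identifies which factor plays the role of $M_{\pi_i}^{\mu_k}(s,a)$ in \cref{eq:behavior_update}. In particular, one must be careful about whether the relevant coefficient is the return variance or its second moment (they differ by $Q_{\pi_i}^2$ terms), and must justify freezing the downstream quantities at $\mu_k$ --- the same fixed-point / coordinate-descent assumption that underlies the improvement guarantee of \cref{thm:behavior_improvement}. Once the decomposition is in hand, the remaining optimization is routine.
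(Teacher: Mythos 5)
Your core argument is essentially the paper's: a per-state constrained minimization with the downstream variance frozen at the current iterate, solved by a Lagrange multiplier for the normalization constraint, yielding $\mu(a|s)\propto\sqrt{\sum_i \pi_i(a|s)^2 M^{\mu_k}_{\pi_i}(s,a)}$. Two remarks on where your write-up and the paper diverge. First, the decomposition you flag as the main obstacle is not something the paper derives via the law of total variance; it is taken as the \emph{definition} of the state variance function (Section~\ref{sec:VarianceOperator}): $M^\mu_\pi(s)=\sum_a \mu(a|s)\,\rho(s,a)^2 M^\mu_\pi(s,a)=\sum_a \frac{\pi(a|s)^2}{\mu(a|s)}M^\mu_\pi(s,a)$, so the $1/\mu$ structure with \emph{variance} (not second-moment) coefficients is immediate. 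This matters: if you carried out your law-of-total-variance computation exactly for the IS-weighted return, the conditional-mean spread term would turn the coefficients into second moments $M+Q_{\pi_i}^2$ plus a constant $-V_{\pi_i}(s)^2$, and the resulting update would \emph{not} be \cref{eq:behavior_update}. So you should simply substitute the paper's definition rather than re-derive it; your hedge about ``variance or second moment'' is resolved by fiat, not by calculation. Second, your step of freezing $M^\mu_{\pi_i}(s,a)$ at $\mu_k$ is exactly the paper's move: it differentiates $\mu\rho_i^2 M^\mu_{\pi_i}(s,a)$ by the product rule and drops the term $\mu\rho_i^2\nabla_\mu M^\mu_{\pi_i}(s,a)$ (its ``IV part''), citing the analogous omission in off-policy actor-critic \citep{degris2012off}; treating $M$ as constant and differentiating $\pi_i^2 M/\mu$ gives the identical gradient $-\rho_i^2 M^\mu_{\pi_i}(s,a)$. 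The paper also carries explicit nonnegativity multipliers $\lambda_{s,a}$ and argues they vanish on the support of the target policies, whereas you get nonnegativity for free from $W(s,a)\geq 0$. Your Cauchy--Schwarz (or convexity) observation is a genuine, if small, strengthening: the paper only verifies first-order stationarity of the frozen problem, while your argument certifies that the normalized $\sqrt{W}$ policy is the global minimizer of the per-state convex program.
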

\begin{proof}
    The proof is presented in \cref{app:behvaior_update_proofs}.
\end{proof}

\cref{thm:behavior_update} describes how to iteratively update the behavior policy $\mu_k$ by using the return variance $M_{\pi_i}^{\mu_k}$. The policy $\mu_{k+1}$ selects actions proportional to their variance, meaning high-variance return $(s,a)$ pairs are explored frequently. By visiting high-variance return pairs, policy gains informative samples and reduce the overall uncertainty. Consequently, this process improves the GVF value predictions and decrease the number of interactions needed for effective learning.

Next \cref{thm:behavior_improvement} shows that each behavior policy update either decreases or maintains the overall variance across GVFs, ensuring consistent progress towards minimizing variance without oscillation. In simple terms, each policy update ensures either stability or improvement in data collection and value estimation efficiency.
% In the next theorem, we establish that each update from $\mu_k$ to $\mu_{k+1}$ decreases the overall variance across GVFs, thereby iteratively  minimizing the objective. This proof is critical as it demonstrate that the proposed algorithm (update rule in \cref{thm:behavior_update}) makes a monotonic progress towards the overall objective of variance minimization without oscillations. In simpler terms, each iterative updates of behavior policy is consistently minimizing the objective, ensuring that the algorithm does not diverge. This guarantees the correctness and the validity of the proposed iterative algorithm.

% \red{These updates drive the behavior policy to explore areas with higher variability, leading to a decrease in the overall variance and, hence, MSE. Furthermore, this approach of reducing overall variance also allows accurate learning of multiple GVF values with fewer data samples, thereby reducing the number of environmental interactions needed to obtain good approximations. In the next theorem, we will establish that each update from $\mu_k$ to $\mu_{k+1}$  decrease the overall variance across GVFs, providing monotonic  progress  towards the overall objective in an oscillation-free manner.}
\begin{restatable}{theorem}{variancedecrease}\label{thm:behavior_improvement}
    \textbf{(Behavior Policy Improvement:)} The  behavior policy update in Eq.(\ref{eq:behavior_update}) ensures that the aggregated variances across all target policies $\pi_{i \in \{1 \dots \cN\}}$ decrease with each update step $k \in \{1 \dots K\}$, that is,
    \vspace{-2ex}
    \begin{align*}
      \sum_{i=1}^\cN M^{\mu_{k+1}}_{\pi_i} \leq \sum_{i=1}^\cN M^{\mu_{k}}_{\pi_i}, \, \forall k.  
    \end{align*}
    \vspace{-4ex}
\end{restatable}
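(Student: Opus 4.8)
The plan is to mirror the classical policy improvement theorem, with the value Bellman operator replaced by the importance-weighted \emph{variance} Bellman operator and the greedy action-selection step replaced by the variance-proportional update of \cref{eq:behavior_update}. First I would express each GVF's return variance as the fixed point of a Bellman-type recursion. Conditioning the return $G^i$ on the first transition $(s,a)\to(s',a')$ with $a'\sim\mu$ and applying the law of total variance gives, after using $\rho_i(s,a)=\pi_i(a|s)/\mu(a|s)$, a recursion whose $\mu$-dependence enters only through a transition-like operator $P^{\mu}$ acting as $(P^{\mu}f)(s,a)=\E_{s'\sim\cP(\cdot|s,a)}\big[\sum_{a'}\tfrac{\pi_i(a'|s')^2}{\mu(a'|s')}f(s',a')\big]$. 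It is cleanest to run the argument on the \emph{second moment} $\bar M_{\pi_i}^{\mu}(s,a)=\E[(G^i_t)^2\mid s,a,a'\sim\mu]$, which obeys
\begin{equation*}
\bar M_{\pi_i}^{\mu}(s,a)=R_i(s,a)+\gamma^2\,(P^{\mu}\bar M_{\pi_i}^{\mu})(s,a),
\end{equation*}
with $R_i$ \emph{independent} of $\mu$; \cref{lem:P_inverse_exists} guarantees $I-\gamma^2P^{\mu}$ is invertible, so this fixed point is well defined. Since the state-level variance satisfies $M_{\pi_i}^{\mu}(s)=\sum_a\tfrac{\pi_i(a|s)^2}{\mu(a|s)}\bar M_{\pi_i}^{\mu}(s,a)-V_{\pi_i}(s)^2$ and $V_{\pi_i}$ does not depend on $\mu$, minimizing the total return variance is equivalent to minimizing the total second moment up to $\mu$-independent constants.

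The crux is a one-step greedy-improvement inequality. Writing $\bar\Phi^{\mu}(s,a)=\sum_i\bar M_{\pi_i}^{\mu}(s,a)$ and letting $T_{\mu}$ be the associated Bellman operator $T_{\mu}g=\sum_iR_i+\gamma^2P^{\mu}g$, I would hold $\bar\Phi^{\mu_k}$ fixed and minimize $T_{\mu}\bar\Phi^{\mu_k}$ over the per-state simplex. The only $\mu$-dependent piece is $\sum_{a'}\tfrac{1}{\mu(a'|s')}\sum_i\pi_i(a'|s')^2\bar M_{\pi_i}^{\mu_k}(s',a')$, and by the Cauchy--Schwarz / Lagrange argument already used to derive \cref{eq:behavior_update}, its minimizer over $\{\mu\ge 0,\ \sum_a\mu=1\}$ is exactly $\mu(a'|s')\propto\sqrt{\sum_i\pi_i(a'|s')^2\bar M_{\pi_i}^{\mu_k}(s',a')}$, with minimum value $\big(\sum_{a'}\sqrt{\sum_i\pi_i^2\bar M_{\pi_i}^{\mu_k}}\big)^2\le\sum_{a'}\tfrac{1}{\mu_k(a'|s')}\sum_i\pi_i^2\bar M_{\pi_i}^{\mu_k}$. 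This is precisely the update $\mu_{k+1}$, so it yields the pointwise inequality $T_{\mu_{k+1}}\bar\Phi^{\mu_k}\le T_{\mu_k}\bar\Phi^{\mu_k}=\bar\Phi^{\mu_k}$.

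To turn this one-step inequality into the claimed global bound, I would invoke monotonicity of $T_{\mu_{k+1}}$: because $P^{\mu_{k+1}}$ has nonnegative entries, $g\le h$ implies $T_{\mu_{k+1}}g\le T_{\mu_{k+1}}h$. Applying $T_{\mu_{k+1}}$ repeatedly to $\bar\Phi^{\mu_k}$ therefore produces a nonincreasing sequence bounded above by $\bar\Phi^{\mu_k}$, and since $\gamma^2P^{\mu_{k+1}}$ is a contraction (spectral radius below one, again via \cref{lem:P_inverse_exists}) this sequence converges to the unique fixed point $\bar\Phi^{\mu_{k+1}}$. Hence $\bar\Phi^{\mu_{k+1}}\le\bar\Phi^{\mu_k}$ pointwise; translating back through the state-level identity and summing against $d(s)$ gives $\sum_iM_{\pi_i}^{\mu_{k+1}}\le\sum_iM_{\pi_i}^{\mu_k}$. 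This is the same scheme as the policy improvement theorem, with \emph{greedy} meaning variance-proportional rather than value-maximizing.

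I expect the main obstacle to be the bookkeeping that isolates \emph{all} of the $\mu$-dependence into the single Cauchy--Schwarz-amenable term. Under off-policy IS the one-step variance-of-the-TD-target contribution is itself $\mu$-dependent, and it is only after passing to second moments that it merges cleanly with the bootstrapped term; one must then check that the update, which \cref{eq:behavior_update} writes in terms of the state-action \emph{variance} $M_{\pi_i}^{\mu_k}(s,a)$ rather than the second moment $\bar M_{\pi_i}^{\mu_k}(s,a)$, is minimizing the right quantity (the two differ by the $\mu$-weighted $Q_{\pi_i}^2$ contributions, so reconciling this with the paper's return convention is the delicate point). The remaining ingredient, monotonicity together with the contraction property of $\gamma^2P^{\mu}$, is exactly what \cref{lem:P_inverse_exists} is there to supply.
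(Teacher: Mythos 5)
Your overall scheme---recast each variance function as the fixed point of an affine Bellman-type operator whose driving term is independent of $\mu$, obtain a one-step greedy inequality via Cauchy--Schwarz, then propagate it by monotonicity and contraction---is the same skeleton as the paper's proof, which gets the one-step inequality from the minimization property underlying \cref{thm:behavior_update} and then propagates it by unrolling the recursion \cref{eq:var_s_a} to infinite depth. However, your execution has a genuine gap, and it sits exactly at the point you yourself flagged as ``the delicate point'' without resolving it: the detour through second moments changes the greedy policy. The minimizer over the simplex of $\sum_{a}\frac{1}{\mu(a|s)}\sum_i\pi_i(a|s)^2\,\bar M^{\mu_k}_{\pi_i}(s,a)$ is the \emph{unique} policy proportional to $\sqrt{\sum_i\pi_i(a|s)^2\bar M^{\mu_k}_{\pi_i}(s,a)}$, whereas the update the theorem is actually about, \cref{eq:behavior_update}, is proportional to $\sqrt{\sum_i\pi_i(a|s)^2 M^{\mu_k}_{\pi_i}(s,a)}$; since the second moment and the variance differ by the $Q_{\pi_i}^2$ terms, these are different policies in general. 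Because the minimizer of your objective is unique, the paper's $\mu_{k+1}$ is generically \emph{not} a minimizer of $T_{\mu}\bar\Phi^{\mu_k}$, so your load-bearing inequality $T_{\mu_{k+1}}\bar\Phi^{\mu_k}\le T_{\mu_k}\bar\Phi^{\mu_k}=\bar\Phi^{\mu_k}$ is unproven and can in fact fail (e.g.\ if $\mu_k$ already minimizes the second-moment objective, any move away from it strictly increases that objective). The fix is to drop the second-moment reformulation entirely: in this paper $M$ is \emph{defined} by \cref{eq:var_s_a} with an Expected-Sarsa TD error, so the driving term $\E_{a\sim\mu}[\delta_t^2\mid s,a]=\sum_{s'}P(s'|s,a)\,\delta(s,a,s')^2$ is already $\mu$-independent, and the matrix form \cref{eq:M_exists} exhibits precisely the affine structure you wanted, with all $\mu$-dependence confined to the $\pi_i^2/\mu$-weighted transition operator. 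Running your greedy step directly on $M$ makes the Cauchy--Schwarz minimizer coincide with \cref{eq:behavior_update}, which is what the paper's proof does.

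A second, smaller defect is that your aggregate operator $T_{\mu}g=\sum_i R_i+\gamma^2P^{\mu}g$ acting on the scalar function $\bar\Phi^{\mu}=\sum_i\bar M^{\mu}_{\pi_i}$ is not well defined once the GVFs have distinct target policies: each component evolves under its own weighting $\pi_i(a'|s')^2/\mu(a'|s')$, so there is one operator $P^{\mu}_i$ per GVF rather than a single $P^{\mu}$ acting on the sum. The argument must therefore be run on the vector $\bigl(M_{\pi_1},\dots,M_{\pi_\cN}\bigr)$, and the greedy step then delivers only a \emph{summed} per-state inequality $\sum_i P^{\mu_{k+1}}_iM^{\mu_k}_{\pi_i}\le\sum_i P^{\mu_k}_iM^{\mu_k}_{\pi_i}$, while propagating through the $i$-dependent nonnegative operators $P^{\mu_{k+1}}_i$ calls for componentwise control; this bookkeeping is where the real work lies for $\cN>1$. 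The paper handles it by keeping the per-$i$ structure explicit in its unrolled expansion and re-applying the summed per-state inequality at each depth; if you repair the first gap and keep components separate in the same way, your contraction-and-monotonicity packaging becomes an alternative rendering of the paper's unrolling argument rather than a proof on its own.
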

\begin{proof}
    The proof is in \cref{app:behvaior_update_proofs}.
\end{proof}
% These theorems collectively provide a comprehensive framework for effectively managing multi-prediction in GVFs. More importantly, 

%We derive a behavior policy update rule (\cref{thm:behavior_update}) that leverages the basic principles of variance minimization. Further, each behavior policy update systematically reduces the total variance across multiple GVFs (\cref{thm:behavior_improvement}), highlighting that each update progressively moves closer to the true objective. Next, we present the variance function $M$ used in the above theorems to design the behavior policy update.

\section{Variance Function}
\label{sec:VarianceOperator}
The theorems provided in the previous section rely on the variance function $M_{\pi_i}^{\mu_k}$. Here, we study this variance function in detail.
\paragraph{What is the Variance Function $M$?} In an off-policy context [\cite{sherstan2018comparing}, \cite{jain2021variance}], introduced the variance function $M$, which estimates the variance in return under a target policy $\pi$ using data from a different behavior policy $\mu$. We will directly use this function $M$ as our variance estimator and present it here for completeness. The function $M$ for a state-action pair under $\pi$, with an importance sampling correction factor $\rho_t = \frac{\pi(a_t|s_t)}{\mu(a_t|s_t)}$, is defined as:
\begin{equation}\label{eq:var_s_a}
\begin{aligned}
    M_\pi^\mu(s,a)&= \text{Var}_{a \sim \mu}\left(G_{t,\pi} | s_t=s, a_t=a\right)
    = \E_{a\sim \mu}\left[\delta_t^2 + \gamma^2 \rho_{t+1}^2 M_\pi^\mu(s_{t+1},a_{t+1})|s_t=s, a_t=a\right]
\end{aligned}
\end{equation}
Here, $G_{t,\pi}$ is the return at time $t$, and $\delta_t = r_{t} + \gamma \E_{a'\sim\pi}[Q_\pi(s_{t+1},a')] - Q_\pi(s_t,a_t)$ is the TD error. We use \textbf{Expected Sarsa}  \citep{sutton2018reinforcement} to compute $\delta_t$, eliminating the need for IS by using the expected value of the next state-action pair under $\pi$ for bootstrapping, thus stabilizing the update and lowering the variance. $M_\pi^\mu(s,a)$ relates the variance under $\pi$ from the current state-action pair to the next, when sampled from $\mu$. This allows effective bootstrapping and iterative update using TD style method. The state variance function is defined as $M^\mu_\pi(s) = \sum_a \mu(a|s) \rho^2(s,a) M^\mu_\pi(s,a)$.

% Similarly, variance given state follows:
% \begin{equation}\label{eq:var_s}
%     \begin{align}
%         M_\pi^\mu(s) &= \text{Var}_{a \sim \mu}\left(G_{t, \pi} | s_t=s \right) = \E_{a \sim \mu}\left[\rho_t^2 \left( \delta_t^2 + \gamma^2 \rho_{t+1}^2 M_\pi^\mu(s_{t+1},a_{t+1})\right) | s_t=s\right]
%     \end{align}
% \end{equation}
% Both state and state-action variances are interconnected, with $M^\mu_\pi(s) = \sum_a \mu(a|s) \rho^2(s,a) M^\mu_\pi(s,a)$.

Note, true $Q_{\pi}$ is required to compute the TD error $\delta_t$ in \cref{eq:var_s_a}. Following \citet{sherstan2018comparing}, we substitute the value estimate $\hat Q$ for the true function $Q_\pi$ to compute $\delta_t$ in \cref{eq:var_s_a}. Additionally, we use variance estimates $\hat M^{\mu_k}_\pi$ to update the next step policy $\mu_{k+1}$ instead of true variance in \cref{eq:behavior_update}. This approach is similar to \textit{generalized policy iteration} \citep{sutton2018reinforcement}, which simultaneously updates value estimator and improves the policy.

Next, we prove the existence of $M$ in \cref{lem:P_inverse_exists}, which was not covered in \citet{jain2021variance}. This proof establishes a loose upper bound on the IS ratio $\rho$, limiting the divergence of the behavior policy from the target policy for effective off-policy variance estimation. This aligns with methods like TRPO \citep{schulman2015trust} and Retrace \citep{munos2016safe}, which stabilize policy updates by controlling divergence.

\begin{restatable}{lemma}{pinverseexists}\label{lem:P_inverse_exists}
\textbf{(Variance Function $M$ Existence:)}
Given a discount factor $0< \gamma \leq 1$, the variance function $M$ exists, if the below condition satisfies, $\E_{a \sim \beh}\left[\rho^2(s,a)\right] < \frac{1}{\gamma^2}$ for all states.
% depends on the invertibility of matrix $(I - \gamma^2 \matr{\bar P_{\beh}})$ if 
% the below condition is satisfied:
%the existence of variance operator $M$ depends upon the invertibility of matrix which follows $(I - \gamma^2 \matr{\bar P_{\beh}})^{-1} = \sum_{t=0}^{\infty} (\gamma^2 \matr{\bar P_{\beh}})^t$ relation, if the below condition is satisfied,
% \vspace{-2ex}
% \[\E_{a \sim \beh}\left[\rho^2(s,a)\right] < \frac{1}{\gamma^2}, \forall{s \in \cS}.\]
\end{restatable}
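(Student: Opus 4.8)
The plan is to recognize the defining recursion in \cref{eq:var_s_a} as a linear fixed-point equation over the finite space of state--action pairs, and then to show that the associated backup operator is a contraction, so that a unique solution $M$ exists. First I would stack the values $M_\pi^\mu(s,a)$ into a vector $M \in \R^{|\cS||\cA|}$ and rewrite \cref{eq:var_s_a} as $M = b + \gamma^2 P_\rho\, M$, where $b(s,a) = \E_{a\sim\mu}[\delta_t^2 \mid s_t=s,\,a_t=a]$ collects the expected squared TD errors (finite whenever the cumulants and $\hat Q$ are bounded) and $P_\rho$ is the nonnegative matrix with entries
\[
[P_\rho]_{(s,a),(s',a')} = \cP(s'\mid s,a)\,\mu(a'\mid s')\,\rho^2(s',a').
\]
Existence and uniqueness of $M$ then reduce to invertibility of $I - \gamma^2 P_\rho$, i.e.\ to showing that the spectral radius of $\gamma^2 P_\rho$ is strictly below $1$.

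Next I would bound this spectral radius through the induced $\ell_\infty$ (maximum-absolute-row-sum) norm, which for the nonnegative matrix $P_\rho$ equals its maximum row sum. Summing a row gives
\[
\sum_{s',a'}[P_\rho]_{(s,a),(s',a')} = \sum_{s'}\cP(s'\mid s,a)\sum_{a'}\mu(a'\mid s')\,\rho^2(s',a') = \sum_{s'}\cP(s'\mid s,a)\,\E_{a'\sim\mu}[\rho^2(s',a')].
\]
By hypothesis $\E_{a'\sim\mu}[\rho^2(s',a')] < 1/\gamma^2$ for every state; since $\cS$ is finite, the constant $c := \max_{s'}\E_{a'\sim\mu}[\rho^2(s',a')]$ satisfies $c < 1/\gamma^2$, and because $\cP(\cdot\mid s,a)$ is a probability distribution each row sum is at most $c$. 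Hence the induced norm obeys $\|\gamma^2 P_\rho\|_\infty \le \gamma^2 c < 1$, and the spectral radius is no larger than this norm, so it too is strictly less than $1$.

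Finally, a spectral radius below $1$ implies that $I-\gamma^2 P_\rho$ is invertible and that the Neumann series $\sum_{n\ge0}(\gamma^2 P_\rho)^n$ converges, yielding the unique solution $M = (I-\gamma^2 P_\rho)^{-1}b$. Equivalently, one may phrase the argument as: the TD-style variance backup $T(M) = b + \gamma^2 P_\rho M$ is a $\gamma^2 c$-contraction in $\|\cdot\|_\infty$, so Banach's fixed-point theorem delivers existence and uniqueness directly. The main obstacle I anticipate is not the contraction estimate itself but getting the operator $P_\rho$ exactly right --- in particular matching the $\rho^2$-weighting together with the $a'\sim\mu$ expectation so that a single row sum collapses precisely to $\E_{a'\sim\mu}[\rho^2(s',a')]$ --- and being careful that the strict \emph{per-state} inequality upgrades to a \emph{uniform} strict bound $c<1/\gamma^2$. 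This last step relies on finiteness of $\cS$ (a maximum of finitely many numbers each below $1/\gamma^2$ is still below $1/\gamma^2$); in an infinite state space one would instead need to assume an explicit uniform gap $\sup_{s}\E_{a\sim\mu}[\rho^2(s,a)] < 1/\gamma^2$.
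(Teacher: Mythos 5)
Your proposal is correct and follows essentially the same route as the paper: write the recursion in matrix form $M = c_\mu + \gamma^2 \bar P_\mu M$, bound the spectral radius of $\gamma^2 \bar P_\mu$ by its maximum row sum (the induced $\ell_\infty$ norm), observe that the row sums collapse to $\sum_{s'}\cP(s'|s,a)\,\E_{a'\sim\mu}[\rho^2(s',a')]$, and invoke the Neumann-series criterion for invertibility of $I - \gamma^2 \bar P_\mu$. Your only addition is the explicit remark that the per-state strict inequality upgrades to a uniform bound via finiteness of $\cS$ --- a point the paper leaves implicit --- so the two arguments are substantively identical.
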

\begin{proof}
Proof in \cref{app:variance_existence}.
\end{proof}
Note, the objective in \cref{eq:lagrangian}, might violate the above constraint on $\rho$; we empirically clip $\rho$ to mitigate this problem. Additionally, IS requires $\mu(a|s)=0$ when $\pi(a|s)=0$. We empirically ensure $\mu(a|s) > \epsilon << 1$ for all actions. The same constraint is added for all the baselines for fair comparison.
% \begin{proof}
% Following~\cref{lem:inverse_matrix_exists}, the inverse $(I - \gamma^2 \matr{\bar P_{\beh}})^{-1}$ exists if  spectral radius $\sr(\gamma^2 \matr{\bar P_{\beh}}) < 1$. Further, from \cref{lem:spectral_radius}, we know that for any given matrix $\matr{A}$, spectral radius satisfies, $\sr(\matr{A}) \leq \|A\|$. Hence, using the above two lemmas, we can express,
% \begin{align*}
%     \sr(\gamma^2 \matr{\bar P_{\beh}}) \leq \|{\gamma^2 \matr{\bar P_{\beh}}}\| \leq \gamma^2 \|{\matr{\bar P_{\beh}}}\|.
% \end{align*}
% Further, if spectral radius satisfies the below condition, then the inverse exists,
% \[\sr(\gamma^2 \matr{\bar P_{\beh}}) \leq \gamma^2 \|{\matr{\bar P_{\beh}}}\| < 1.\] We expand the middle infinity norm term and get
% \begin{align*}
%     \max_{s,a} \sum_{s',a'}&\matr{\bar P_{\beh}}(s,a, s',a') < \frac{1}{\gamma^2}\\
%     \max_{s,a} \sum_{s'}& P(s'|s,a)\sum_{a'} \beh(a'|s') \rho^2(s',a') < \frac{1}{\gamma^2}.
% \end{align*}
% We can further express the above condition as $\E_{a\sim \mu}\left[\rho(s,a)^2 \right] < \frac{1}{\gamma^2} , \forall s \in \cS.$
% \end{proof}

% \cref{lem:P_inverse_exists} results in  a loose upper bound on the IS weight $\rho$. This constraint emerges as a necessary condition for the existence of the variance operator in \cref{eq:M_exists}. 
 % $\rho(s,a) = \min\left\{\dfrac{\pi(s,a)}{\mu(s,a)}, \dfrac{1}{\gamma}\right\}$. This constraint emerges naturally as a necessary condition for the existence of the variance operator in \cref{eq:M_exists}. 

\section{Algorithm} 
We present \Ours/ algorithm, detailed in Algorithm~\ref{algo:behavior_update}. Our approach uses two networks: $Q_\theta$ for value function and  $M_w$ for variance, each with $\cN$ heads (one head for each GVF). Starting with a randomly initialized behavior policy, the agent observes cumulants for $\cN$ GVFs at each step and updates $Q_\theta$ using off-policy TD. We use \textbf{Expected Sarsa} \citep{sutton2018reinforcement} for both $Q$ and $M$, eliminating off-policy corrections. The target $Q$ updates follow:
\vspace{-1ex}
\begin{align}\label{eq:sarsa-q-update}
  Q_{tar}(s_t,a_t, s_{t+1}) = c_t + \gamma \sum_{a'}\pi(a'|s_{t+1})Q_\theta(s_{t+1}, a').
\end{align}
We use the TD error from $Q$ learning, $\delta_Q = Q_{tar} - Q_\theta$, to update target $M$,
\vspace{-1ex}
\begin{align}\label{eq:sarsa-m-update}
  M_{tar}(s_t,a_t,s_{t+1}) = \delta_Q^2 + \gamma^2 \sum_{a'}\pi(a'|s_{t+1})M_w(s_{t+1},a').  
\end{align}
Both networks are updated via MSE loss. The behavior policy is iteratively updated using the new variance estimates for $K$ steps, with learned $Q$ values used for MSE metrics in \cref{eq:mse}.

To ensure reliable estimates, we initialize $M$ values to small non-zero constants and apply epsilon exploration, which decays over time, ensuring coverage of the state-action space. This guarantees that agents visit a broad range of state-action pairs early on, preventing issues of zero variance for unvisited pairs. We applied epsilon-exploration to both \Ours/ and the baselines for fair comparison.

We also use techniques like Experience Replay Buffer for data reuse and target networks for both $Q$ and $M$ to improve learning stability. Expected Sarsa is used consistently across all baselines for fair comparison. Refer \cref{algo:behavior_update} for further details.
\begin{algorithm}[h]
\DontPrintSemicolon
\SetAlgoLined
    \KwIn{Target policies $\pi_{i\in\{1, \dots n\}}$, initialized behavior policy $\mu_1$, replay buffer $\cD$, primary networks $Q_\theta, M_w$ (small non-zero $M$), target networks $Q_{\bar\theta}, M_{\bar w}$, learning rates $\alpha_Q$, $\alpha_M$, mini-batch size $b$, trajectory length $T$, target update frequency $l=100$, value/variance update frequencies $p=4$, $m=8$, training steps $K$, exploration rates $\epsilon_0$, $\epsilon_{\text{decay}}$, $\epsilon_{\min}$}

    \BlankLine
    \For{environment step $ k =1, \dots K$}{
        Set exploration rate: $\epsilon_k = \max(\epsilon_{\min}, \epsilon_0 \cdot \epsilon_{\text{decay}}^k)$\;
        Select action $a_t \sim \epsilon$-greedy policy $\mu_k(\cdot|s_t)$\;
        Observe next state $s_{t+1}$ and cumulants $c_t=\textit{Vector(size}(n))$\;
        Store transition $(s_t, a_t, s_{t+1}, c_t)$ in $\cD$\;
        
        \If{$step\% p ==0$}{
        \blue{//Update the Value $Q_\theta$ network}\;
        Sample mini-batch of size $b$ of transition $(s_t, a_t, s_{t+1}, c_t) \sim \cD$.\;
        Update $Q_\theta$ using MSE loss $(Q_{tar}(s_t, a_t) - Q_\theta(s_t, a_t))^2$, where $Q_{tar}$ is \cref{eq:sarsa-q-update}.\;
        }

        \If{$step \% m ==0$}{
        \blue{//Update the Variance $M_w$ network}\;
        Sample mini-batch of size $b$ of transition $(s_t, a_t, s_{t+1}, c_t) \sim \cD$\;
        Update $M_w$ using MSE loss $(M_{tar}(s_t, a_t) - M_w(s_t, a_t))^2$, where $M_{tar}$ is  \cref{eq:sarsa-m-update}.\;
        }
        
        \If{$step\%l ==0$}{
        $\bar w = w$ and $\bar \theta=\theta$ \blue{\quad //Update both target networks weights}\;
        }

        \blue{//Update the behavior policy $\mu$ using the new Variance $M_w$}\;
        Behavior policy becomes: \small{$\mu_{k+1}(a|s) = \dfrac{\sqrt{\sum_{i=1}^n \pi_i(a|s)^2 M_w^i(s,a)}}{\sum_{a'\in\cA}\sqrt{\sum_{i=1}^n\pi_i(a'|s)^2 M_w^i(s,a')}}, \forall s\in \cS, a \in \cA.$}
        
    }
    \textbf{Returns} GVFs Values $V_{i}(s) = \sum_a \pi_i(a|s)Q_{\theta}^i(s,a)$ for $i = \{1, \dots, n\}$\;
\caption{\Ours/: Efficient Behavior Policy Iteration for Multiple GVFs Evaluations}\label{algo:behavior_update}
\end{algorithm}

\section{Experiments}
\label{sec:Experiments}
We investigate the empirical utility of our proposed algorithm in both discrete and continuous state environments. Our experiments are designed to answer the following questions: (a) How does \Ours/  compare with the different baselines (explained below) in terms of convergence speed and estimation quality? (b) Can \Ours/ handle a large number of GVFs evaluations? (c) Can \Ours/ work with non-stationary GVFs which change with time? (d) Can \Ours/  work with non-linear function approximations and complex Mujoco environments? \footnote{The code is available at \href{https://github.com/arushijain94/GVFExplorer}{Github}.}

\paragraph{Baselines.} We use Off-policy Expected Sarsa updates for parallel GVF estimations for all the experiments (including baselines) for fair comparison. We benchmark against several different \textbf{baselines}: (1) \RR/: uses a round-robin strategy sampling episodically from all target policies (2) \Mix/: Aggregated policy sampling from all target policies; (3) \SR/: a Successor Representation (SR) method using intrinsic reward of total change in SR and reward weights to learn behavior policy~\citep{mcleod2021continual}. (4) \BPS/: behavior policy search method originally designed for single policy evaluation using a REINFORCE variance estimator \citep{hanna2017data}; we adapted it by averaging variance across multiple GVFs (similar to our objective). \BPS/ results are limited to tabular settings due to scalability issues with it. (5) \Uni/: a uniform sampling policy over the action space. Implementation details and hyperparameters are in \cref{app:experiments}.

\paragraph{Type of Cumulants.} We experiment with three different types of cumulants, similar to \cite{mcleod2021continual} -- \textbf{constant} with a fixed value; \textbf{distractor}, a \textit{stationary} signal with fixed mean and constant variance (normal distribution); \textbf{drifter}, a \textit{non-stationary} cumulant with zero-mean random walk with low variance (vary with time). Further description of cumulants is in \cref{app:cumulants}.

\paragraph{Experimental Settings.} To answer the questions presented above, we consider different settings: \textbf{(Two Distinct Policies \& Identical Cumulants):} In a tabular setting, we examine two GVFs with distinct target policies but identical \textit{distractor cumulant}, $(\pi_1, c), (\pi_2,c)$. \textbf{(Two Distinct Policies \& Distinct Cumulants):} In the same environment, we assess two GVFs with distinct target policy and distinct \textit{distractor cumulant} with different fixed means, $(\pi_1, c_1), (\pi_2,c_2)$. \textbf{(Large Scale Evaluation with $40$ distinct GVFs):} To verify the scalability of proposed method with high number of GVFs, we evaluate combinations of $4$ different target policies $\pi_1 \dots \pi_4$ with $10$ different \textit{constant cumulants} $c_1 \dots c_{10}$, resulting in 40 GVFs. \textbf{(Non-Stationary Cumulants in FourRooms):} In FourRooms environment, we assess with two distinct GVFs - stationary distractor and non-stationary \textit{drifter cumulant} -- $(\pi_1,c_1),(\pi_2,c_2)$. \textbf{(Non-Linear Function Approximation):} In a continuous state environment with non-linear function approximator, we evaluate two distinct \textit{distractor} GVFs, $(\pi_1, c_1), (\pi_2,c_2)$. \textbf{(Mujoco environments):} In Mujoco environments -- walker and cheetah -- evaluate different GVF tasks like walk, run and flip. Across these varied settings, we measure the averaged MSE across multiple GVFs.

\subsection{Tabular Experiments}
We conducted experiments in $20\times20$ gridworld with four cardinal actions and a tabular $20\times20$ FourRooms environment for added complexity. The discount factor is $\gamma=0.99$, and the environment is stochastic with a $0.1$ probability of random movement. The cumulants are zero everywhere except for at the goals. Episode terminates  after $500$ steps or upon reaching the goal. True value function for MSE computation is calculated analytically $V_\pi = (I - \gamma P_\pi)^{-1}c_{\pi}$. Detailed description of target policies and cumulants is provided in \cref{app:tabular_in_appendix}. \cref{tab:summary_tabular_perf} summarizes the below results for tabular experiments.

In \Seta, we consider gridworld environment with \textit{distractor} cumulant at top left corner with a reward drawn from normal distribution. \cref{fig:grid_simple} shows the averaged MSE across the two GVFs, with \Ours/ showing much lower MSE compared to baselines.

Next, in \Setb, we consider two distinct \textit{distractor} cumulant (with different mean) GVFs placed at top-left and top-right corner respectively. \cref{fig:both_distinct_grid} shows \Ours/ with reduced MSEs compared to baselines. \cref{fig:val_error_base,fig:val_error_ours} qualitatively analyze the average absolute difference between true and estimated GVF values across states, $\E_i[|V_{\pi_i}^{c_i} - \hat V_{\pi_i}^{c_i}|]$, showing smaller errors (duller colors) for \Ours/. \cref{fig:multi_goal_grid_stochastic_app} (in \cref{app:setb}) presents the individual variance and MSE for both GVFs in \Ours/. Further, we conduct an ablation study to experiment with how \Ours/ performance changes with poorer feature approximations. \cref{fig:crude_approx} (in \cref{app:crude_approx_app}) shows that MSE increases as the feature quality deteriorates, but \Ours/ remains robust with moderately coarse approximations.

For \Sete, we evaluate the performance in FourRooms (FR) environment \citep{sutton1999between} with two distinct GVFs: stationary  \textbf{distractor} cumulant and a non-stationary \textbf{drifter} cumulant which changes value over time. As shown in \cref{fig:fr_drifter}, \Ours/ reduces MSE faster than other baselines, even with the non-stationary cumulant. \cref{fig:fr_env_variance} (in \cref{app:sete}) demonstrates the effectiveness of \Ours/ in tracking the non-stationary cumulant signal in the later stages of learning.

In \Setc, we evaluate our method's scalability to large number of GVFs (refer \cref{app:large_scale_exp}). We use \textbf{constant} cumulants with values ranging in $[50, 100]$. \cref{fig:scaled_gvf} compares the average MSE across the GVFs, showing that \Ours/ scales well with an increasing number of GVFs. In contrast, the \SR/ baseline struggles with scalability due to the varying cumulant scales affecting the intrinsic reward (the summation of all SRs and reward weights) of behavior policy.
%--------
\begin{figure}[ht]
    \centering
    % First row
    \begin{subfigure}{0.3\linewidth}
        \includegraphics[width=0.94\textwidth]
        {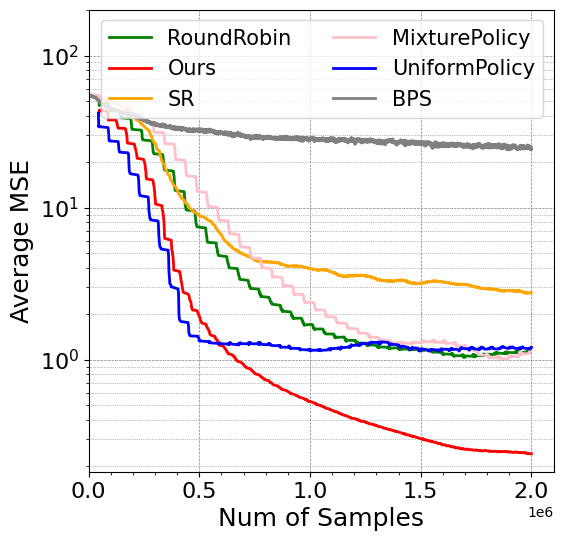}
        \caption{\small{\Seta }}
        \label{fig:grid_simple}
    \end{subfigure}
    \hspace{0.02\linewidth}
    \begin{subfigure}{0.3\linewidth}
        \includegraphics[width=0.94\textwidth]{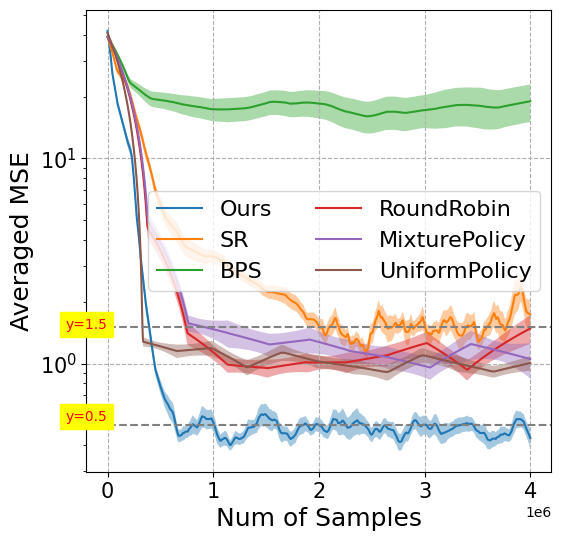}
        \caption{\small{ \Sete}}
        \label{fig:fr_drifter}
    \end{subfigure}
    \hspace{0.02\linewidth}
    \begin{subfigure}{0.3\linewidth}
        \includegraphics[width=0.94\textwidth]
        {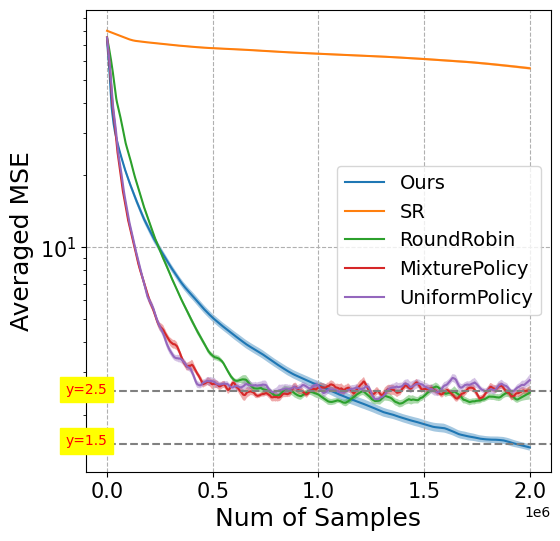}
        \caption{\small{\Setc}}
        \label{fig:scaled_gvf}
    \end{subfigure}
    \caption{\textbf{MSE Performance}: Averaged MSE over $25$ runs with standard error in different experimental settings. \Ours/ demonstrate notably lower MSE compared to the baselines.}
    \label{fig:multi_envs}
    \vspace{-4ex}
\end{figure}
\begin{figure}[h]
    \centering
    \begin{subfigure}{0.3\textwidth}
        \includegraphics[width=0.94\linewidth]
        {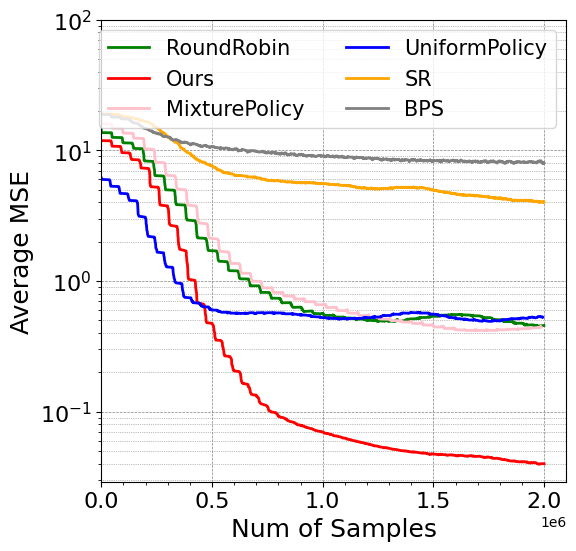}
        \caption{\small{Averaged MSE}}
        \label{fig:both_distinct_grid}
    \end{subfigure}
    \hspace{0.02\linewidth}
    \begin{subfigure}{0.3\linewidth}
        \includegraphics[width=\textwidth]{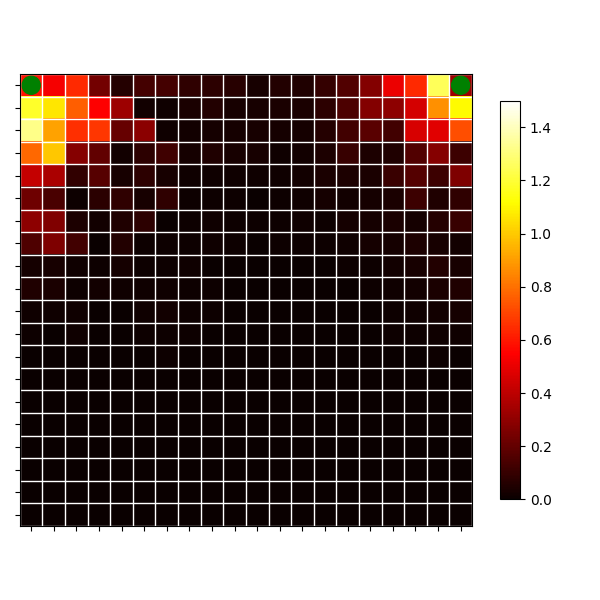}
        \caption{\small{Value Error (\RR/)}}
        \label{fig:val_error_base}
    \end{subfigure}
    \hspace{0.02\linewidth}
    \begin{subfigure}{0.3\linewidth}
        \includegraphics[width=\textwidth]{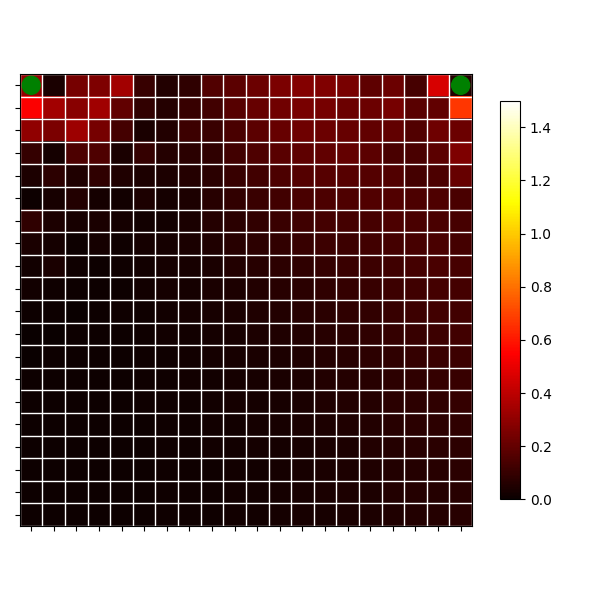}
        \caption{\small{Value Error (\Ours/)}}
        \label{fig:val_error_ours}
    \end{subfigure}
    \caption{\Setb: Evaluate averaged MSE over 25 runs with two distinct distractor GVFs $(\pi_1,c_1),(\pi_2,c_2)$ in gridworld . Green dots at top show two GVF goals. (a) Averaged MSE, (b) averaged absolute error in GVFs value predictions for baseline \RR/ and (c) \Ours/. \textit{The color bar uses log scale \& vibrant colors indicate higher values.}}
    \label{fig:multi_goal_grid_stochastic}
\end{figure}

\subsection{Continuous State Environment with Non-Linear Function Approximation}
We use a continuous grid environment that extends the tabular experiments to a continuous state space (similar to \cite{mcleod2021continual}) and four discrete actions. For \Setd, we consider two distinct GVFs with distractor cumulants. An \textbf{Experience Replay Buffer} with a capacity of 25K and a batch size of 64 is used for all experiments. Further details on computing true value functions using Monte Carlo and network architectures are in \cref{app:cont_appendix}.

\paragraph{Prioritized Experience Replay (PER).} We integrate PER  \citep{schaul2015prioritized} to evaluate the effectiveness of our algorithm. Unlike the standard Experience Replay Buffer, which uniformly samples experiences, PER assigns priorities based on the TD error magnitude in the Q-network. PER and \Ours/ are complementary approaches: PER re-weights the collected data in replay buffer based on the priority, while \Ours/ adjusts the behavior policy to influence data collection. 

Combining PER with \Ours/ drastically lowers MSE compared to other baselines (even when compared to all baselines + PER). We use the absolute sum of TD errors across multiple GVF Q-functions as a priority metric for PER in all baselines, including \Ours/. Placing the priority on the TD error of the variance function in \Ours/ yields less favorable results compared to priority on Q-function's TD error.  In \cref{fig:mse_2d_world}, we present the MSE for both standard experience replay (solid lines) and PER (dotted lines) for all algorithms. PER generally reduces MSE, but its integration with \Ours/ shows much lower MSE. This is likely as \Ours/ could over-sample high variance return samples, causing a skewed buffer distribution. PER's non-uniform sampling maintains a balanced data distribution, which helps in stringent MSE reduction. For the \SR/ baseline, using the TD error in SR predictions as a priority for PER led to performance degradation, suggesting non-stationarity in SRs' TD errors might mislead PER to prioritize less relevant states under the current policy. The original \SR/ work by \cite{mcleod2021continual} does not use PER in the experiments. For PER scenario, we qualitatively compare the absolute value error for baseline \RR/ and \Ours/ by discretizing the state space in \cref{fig:val_error_cont_base,fig:val_error_cont_ours} and observe that our algorithms results in smaller value prediction error. Further insights into the variance estimation by \Ours/ is shown in \cref{fig:cont_env_variance,fig:cont_value_error} (\cref{app:cont_appendix}). 
 \cref{tab:mse_in_cont_env}((\cref{app:cont_appendix}) summarizes the results highlighting the performance of various algorithms.
% \vspace{-2ex}
\begin{figure*}[th]
    \centering
    \begin{subfigure}{.28\linewidth}
        \centering
        \includegraphics[width=\textwidth]{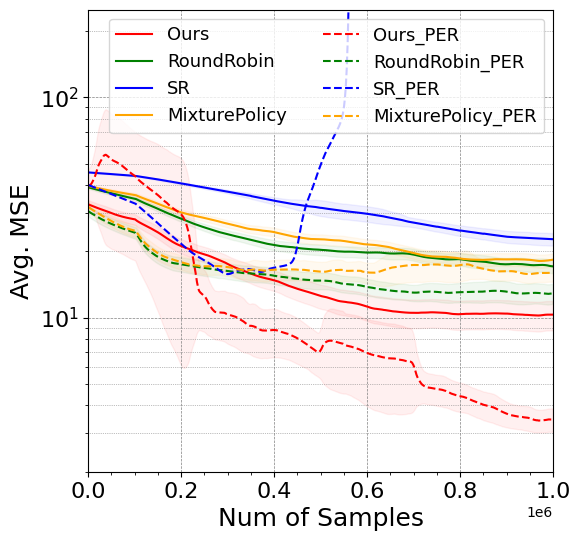}
        \caption{Avg. MSE}
        \label{fig:avgmse_sub1}
    \end{subfigure}%
    \begin{subfigure}{.38\linewidth}
        \centering
        \includegraphics[width=\textwidth]{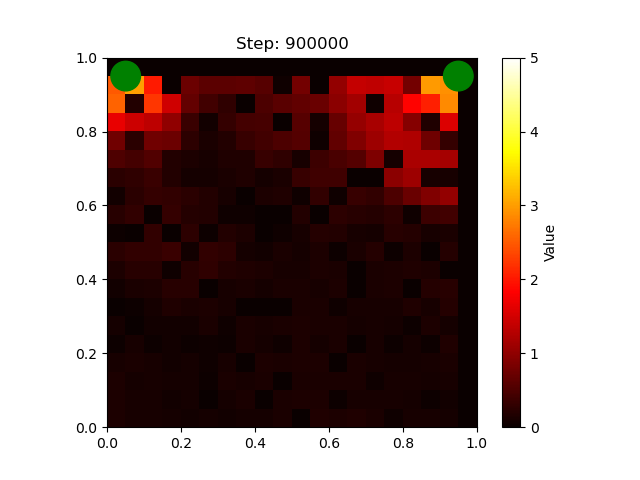}
        \caption{Value error (\RR/)}
        \label{fig:val_error_cont_base}
    \end{subfigure}%
    \begin{subfigure}{.38\linewidth}
        \centering
        \includegraphics[width=\textwidth]{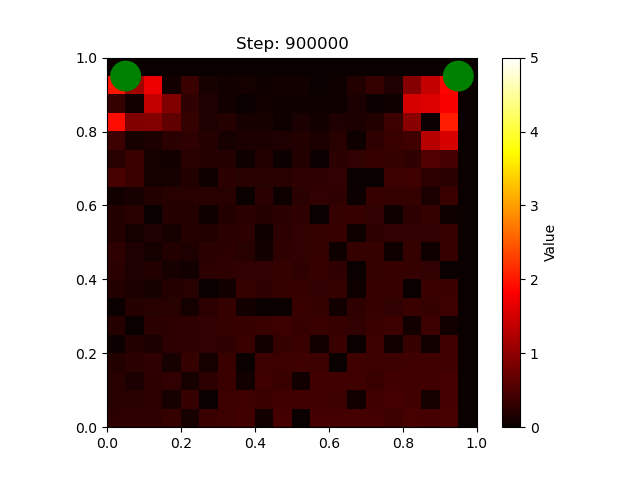}
        \caption{Value error (\Ours/)}
        \label{fig:val_error_cont_ours}
    \end{subfigure}
    \caption{\textbf{\Setd}: (a) Averaged MSE over $50$ runs with standard error using  \textbf{Experience Replay Buffer} (solid lines) and  \textbf{PER} (dotted lines). \Ours/ show lower MSE with both buffers. PER generally reduces MSE across all algorithms except \SR/. Log-scale absolute value error for \RR/ (b) and \Ours/ (c); \Ours/ achieves smaller errors (vibrant colors represent higher values).
    }
    \label{fig:mse_2d_world}
% \vspace{-2ex}
\end{figure*}

\subsection{Mujoco Environments with Continuous State-Actions Tasks}\label{sec:mujoco}
We use DM-Control \citep{tassa2018deepmind} based continuous state-action tasks to experiment with Mujoco environments, \textit{Walker} and \textit{Cheetah} domain. To expand the proposed method to continuous action environments, any policy-gradient (PG) based algorithm can be used. In our experiments, we use Soft Actor-Critic (SAC) algorithm \citep{haarnoja2018soft} as a base PG method to incorporate the proposed variance-minimization objective.

A separate network for variance estimation is added to SAC. Further implementation details are provided in \cref{app:mujoco_exp}. To experiment in \textit{Walker} environment, we use two GVF tasks, namely walk and flip. Similarly, for \textit{Cheetah} environment, we use walk and run GVF tasks. We also added KL regularize between the learned behavior policy and the given GVFs target policies to prevent divergence. We use MC to compute the true Q-value GVF estimates and compare the MSE between these MC values and the Q-critic network. We use the same Q-critic architecture for the baseline algorithms --  \Uni/ and \RR/ -- for fair comparison. In \cref{fig:mujoco_env} we observe that \Ours/ reduces MSE faster than the baselines.
\begin{figure}[h]
    \centering
    \begin{subfigure}{0.35\textwidth}
        \includegraphics[width=0.94\linewidth]{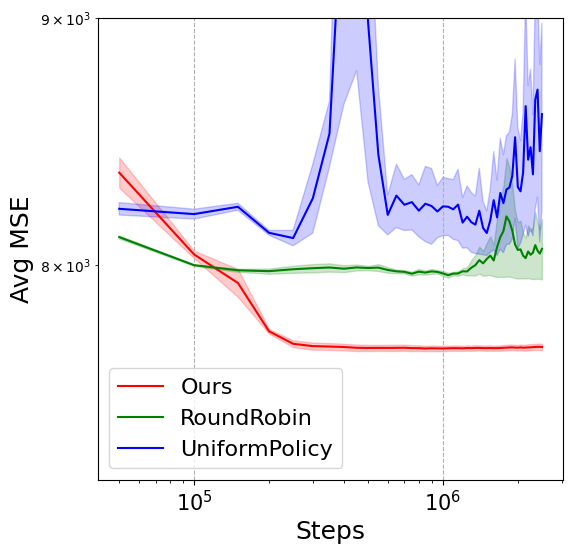}
        \caption{\small{Averaged MSE in Walker}}
    \end{subfigure}
    \hspace{0.1\linewidth}
    \begin{subfigure}{0.35\linewidth}
        \includegraphics[width=0.94\textwidth]{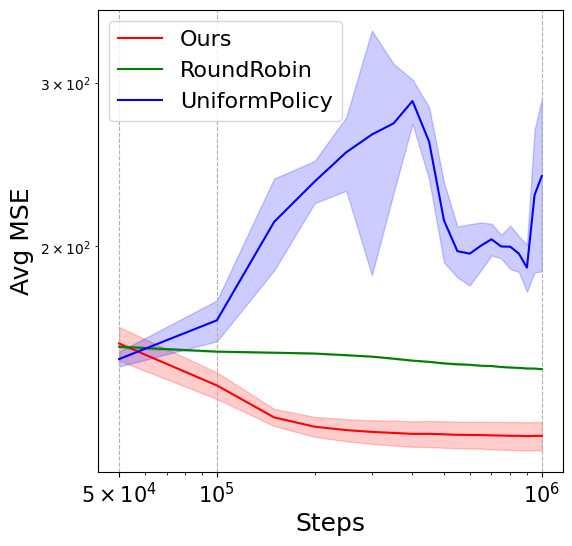}
        \caption{\small{Averaged MSE in Cheetah}}
    \end{subfigure}
    \caption{\textbf{MSE in Mujoco}: Averaged MSE over $5$ runs with standard error in Mujoco environment with continuous state-actions for (a)Walker and (b)Cheetah domains for \Ours/, \Uni/ and \RR/. \Ours/ consistently lowers averaged MSE as compared to the baselines.}
    \label{fig:mujoco_env}
\end{figure}

\section{Discussion}
\label{sec:conclusion}
We addressed the problem of parallel evaluations of multiple GVFs, each conditioned on a given target policy and cumulant. We developed a method to adaptively learn a behavior policy that uses a single experience stream to estimate all GVF values in parallel. The resulting behavior policy update selects the actions in proportion to the total variance of the return across GVFs. This guides the policy to frequently explore less understood areas (high variance in return), which helps to better estimate the mean return with fewer samples. Therefore, our approach lowers the overall MSE in GVF  predictions and uses fewer interactions. We theoretically proved that each behavior policy update reduces the total prediction error. Empirically, we showed that \Ours/ scales effectively with an increasing number of distinct GVFs, robustly handles non-stationary cumulants in a tabular setting, and adapts well to non-linear function approximation. Further, we also demonstrated the performance in complex continuous state-action Mujoco environments by showing that \Ours/ can be combined effectively with any existing policy-gradient methods. 

\paragraph{Limitations and Future Work.}
One notable drawback of  \Ours/ is the increased time complexity, due to simultaneously learning two networks for value and variance estimation respectively. Additionally, \Ours/ has not been evaluated in environments with significant difference in the cumulant value range. Such disparities could lead to varying variances, potentially resulting in oversampling areas with higher cumulant values. Calibration across cumulants may be necessary in these cases.

In this work, we focused on minimizing the total MSE, but other loss functions, such as weighted MSE could also be considered. However, weighted MSE requires prior knowledge about the weighting of errors in different GVFs, which is not readily available. A potential future direction could be to use variance scales to automatically adjust these weights to provide uniform MSE reduction across all GVFs. Looking ahead, we are interested in testing our approach with multi-dimensional cumulants and general state-dependent discount factors, as well as, extending the applicability of \Ours/ to control settings where the target policies are unknown.

\begin{ack}
We are grateful to the anonymous reviewers for their valuable feedback. We also extend our thanks to Nishanth Anand, Kshitij Jain, Pierre-Luc Bacon, and Subhojyoti Mukherjee for their insightful suggestions.
\end{ack}
%%%%%%%%%%%%%%%%%%%%%%%%%%%%%%%%%%%%%%%%%%%%%%%%%%%%%%%%%%%%

%%%%%%%%%%%%%%%%%%%%%%%%%%%%%%%%%%%%%%%%%%%%%%%%%%%%%%%%%%%%
\newpage
\appendix

\section{Proofs}\label{app:proofs}
\subsection{Behavior Policy Update Theorems}\label{app:behvaior_update_proofs}
\behaviorupdate*
\begin{proof}
We formulate \cref{eq:lagrangian} as a Lagrangian equation below to solve for the optimal behavior policy $\mu^{*}$.
\begin{align}\label{eq:lag}
    \gL(\mu, \lambda_{s,a}, w_{s}) = \underbrace{\sum_i \sum_s d(s) M^\mu_{\pi_i}(s)}_{\text{I part}} + \underbrace{\sum_{s,a} \lambda_{s,a} \mu(s,a)}_{\text{II part}} + \underbrace{\sum_s w_{s} (1 - \sum_a \mu(s,a))}_{\text{III part}}.
\end{align}
Here, $\matr{\lambda} \in \sR^{|\cS \times \cA|}$ and $\matr{w} \in \sR^{|\cS|}$ denotes the Lagrangian multipliers. The following KKT conditions satisfy:
\begin{enumerate}
    \item $\nabla_{\mu(s,a)} \gL = 0$
    \item $\lambda_{s,a} \mu(s,a) = 0$
    \item $\lambda_{s,a} \geq 0$
    \item $\mu(s,a) \geq 0$
    \item $\sum_a \mu(s,a) = 1$
\end{enumerate}

\paragraph{Gradient of $\rho$.} The gradient of  $\rho(s,a)$ w.r.t. $\mu(a|s)$,
\begin{align*}
    \nabla_{\mu(s,a)}\rho(s,a) &= \frac{\pi(a|s)}{\nabla \mu(a|s)} = -\frac{\pi(a|s)}{\mu(a|s)^2}= - \frac{\rho(s,a)}{\mu(a|s)}.
\end{align*}

% \paragraph{Solving I part.} We will compute the gradient of $M^\mu_{\pi_i}(s)$ in \cref{eq:var_s_a} w.r.t to given $\mu(s,a)$. Here, $\rho(s,a) = \frac{\pi(a|s)}{\mu(a|s)}$ is IS weight. We expand the $M^\mu_{\pi_i}(s)$ to get the gradient of $M^\mu_{\pi_i}(s)$ wrt $\mu(s,a)$,
% \begin{align*}
%     M^\mu_{\pi_i}(s) &= \E_{a\sim \mu}\left[\delta_t^2 + \gamma^2 \rho_{t+1}^2 M_\pi^\mu(s_{t+1},a_{t+1})|s_t=s, a_t=a\right]\\
%    \nabla_{\mu(s,a)} M^\mu_{\pi_i}(s) &=\underbrace{2\E_{a\sim \mu}\left[\delta_t \nabla_{\mu(s,a)}\delta_t\right]}_{=0 \, \text{[Theorem B.2, \cite{jain2021variance}]}} + \nabla_{\mu(s,a)}\left\{\gamma^2 \sum_a \mu(a|s)\rho_i(s,a)^2 M^\mu_{\pi_i}(s,a)\right\}\\
%     \nabla_{\mu(s,a)} M^\mu_{\pi_i}(s) &= \nabla_{\mu(s,a)}\left\{\gamma^2\sum_a \mu(a|s)\rho_i(s,a)^2 M^\mu_{\pi_i}(s,a)\right\}\\
%     &= \gamma^2\rho_i(s,a)^2 M^\mu_{\pi_i}(s,a) + 2 \gamma^2\mu(a|s)\rho_i(s,a) \underbrace{\nabla\rho_i(s,a)}_{=- \dfrac{\rho_i(s,a)}{\mu(a|s)}} M^\mu_{\pi_i}(s,a) + \underbrace{\gamma^2\mu(a|s) \rho_i(s,a)^2 \nabla_{\mu} M^\mu_{\pi_i}(s,a)}_{=\text{IV part}}\\
%     &=\gamma^2\rho_i(s,a)^2 M^\mu_{\pi_i}(s,a) - 2 \gamma^2\rho_i(s,a)^2 M^\mu_{\pi_i}(s,a)\\
%     &= - \gamma^2\rho_i(s,a)^2 M^\mu_{\pi_i}(s,a).
% \end{align*}
\paragraph{Solving I part.} We will compute the gradient of $M^\mu_{\pi_i}(s)$ in \cref{eq:var_s_a} w.r.t to given $\mu(s,a)$. Here, $\rho(s,a) = \frac{\pi(a|s)}{\mu(a|s)}$ is IS weight. We expand $M^\mu_{\pi_i}(s)$ relation with $M^\mu_{\pi_i}(s,a)$ to derive the gradient,
\begin{align*}
    M^\mu_{\pi_i}(\tilde s) &= \sum_{\tilde a} \mu(\tilde a|\tilde s) \rho_i(\tilde s,\tilde a)^2 M^\mu_{\pi_i}(\tilde s,\tilde a)\\
    \nabla_{\mu(s,a)} M^\mu_{\pi_i}(\tilde s) &= \nabla_{\mu(s,a)}\left\{ \sum_{\tilde a} \mu(\tilde a|\tilde s) \rho_i(\tilde s,\tilde a)^2 M^\mu_{\pi_i}(\tilde s,\tilde a)\right\}\\
    &= \rho_i(s,a)^2 M^\mu_{\pi_i}(s,a) + 2 \mu(a|s)\rho_i(s,a) \underbrace{\nabla\rho_i(s,a)}_{=- \dfrac{\rho_i(s,a)}{\mu(a|s)}} M^\mu_{\pi_i}(s,a) + \underbrace{\mu(a|s) \rho_i(s,a)^2 \nabla_{\mu} M^\mu_{\pi_i}(s,a)}_{=\text{IV part}}\\
    &=\rho_i(s,a)^2 M^\mu_{\pi_i}(s,a) - 2 \rho_i(s,a)^2 M^\mu_{\pi_i}(s,a)\\
    &= - \rho_i(s,a)^2 M^\mu_{\pi_i}(s,a).
\end{align*}
The final term $\nabla_{\mu} M^\mu_{\pi_i}(s,a)$  is difficult to estimate in an iterative off-policy setting. Hence, we are omitting (\textit{IV part}) from the above gradient, which is similar to \cite{degris2012off}[Sec 2.2], where the gradient of $Q(s,a)$ was omitted while deriving the policy update.

Solving for the Lagrangian~\cref{eq:lag} further by substituting the (\textit{I part}), and taking derivation of II \& III part and using the (1) KKT condition.
\begin{align}\label{eq:inter_lag}
    \nabla_{\mu(s,a)} \gL(\mu, \lambda_{s,a}, w_{s}) &= - \sum_i d(s)\rho_i(s,a)^2 M^\mu_{\pi_i}(s,a) + \lambda_{s,a}  -  w_{s} = 0.
\end{align}
From (2)KKT condition, we know that either $\lambda_{s,a}=0$ or $\mu(a|s)=0$. Following the arguments of IS, support for $\beh(a|s)$ can only be $0$ when the support for target policy $\tar(a|s)=0$. Solving for the case when support for target policy in non-zero, then let $\lambda_{s,a}=0$. We can simplify the gradient of Lagrangian in \cref{eq:inter_lag},
\begin{equation}\label{eq:inter_inter_lag}
\begin{aligned}
    w_s &= -\sum_i d(s)\rho_i(s,a)^2 M^\mu_{\pi_i}(s,a) = -\sum_i d(s)\dfrac{\pi_i(a|s)^2}{\mu(a|s)^2} M^\mu_{\pi_i}(s,a)\\
    \mu(a|s) &= \sqrt{\frac{\sum_i \pi_i(a|s)^2 M^\mu_{\pi_i}(s,a)}{-w_s/d(s)}}
\end{aligned}
\end{equation}
We know that the numerator is always positive (variance $M$ is positive), therefore $w_s <0$. Let $y_s = - w_s/d(s)$. From condition (5), we know that $\sum_a \mu(a|s) = 1$. Using \cref{eq:inter_inter_lag} and summing over all the actions we get,
\begin{align*}
    \sum_a \mu(a|s) = \sum_a \sqrt{\dfrac{\sum_i \pi_i(a|s)^2 M^\mu_{\pi_i}(s,a)}{y_s}} = 1\\
    \text{Hence, } \sqrt{y_s} = \sum_a \sqrt{\sum_i \pi_i(a|s)^2 M^\mu_{\pi_i}(s,a)}.
\end{align*}
Therefore, the update for optimal behavior policy becomes,
\begin{align*}
    \mu(a|s)^* = \dfrac{\sqrt{\sum_i \pi_i(a|s)^2 M^{\mu^*}_{\pi_i}(s,a)}}{\sum_a \sqrt{\sum_i \pi_i(a|s)^2 M^{\mu^*}_{\pi_i}(s,a)}}.
\end{align*}
As the optimal policy $\beh^*$ appear on both the sides, this can be interpreted as an iterative update, where $k$ denotes the iterate number.
\begin{align*}
    \mu_{k+1}(a|s) = \dfrac{\sqrt{\sum_i \pi_i(a|s)^2 M^{\mu_k}_{\pi_i}(s,a)}}{\sum_a \sqrt{\sum_i \pi_i(a|s)^2 M^{\mu_k}_{\pi_i}(s,a)}}.
\end{align*}
% \smnote{This maybe relevant \cite{zanette2021provable} in theory.}

% The final stopping condition should follow:
% \begin{align*}
%     \norm{\sigma_{\mu_K} - \sigma_{\mu_{K-1}}} \leq \epsilon \implies 
%     \norm{\mu_{K}(s,a) - \mu_{K-1}(s,a)} \leq \epsilon, \forall s,a.
% \end{align*}
\end{proof}

\variancedecrease*
\begin{proof}
\cref{thm:behavior_update} suggests, for any given $\mu_k$ behavior policy, the next successive approximation $\mu_{k+1}$ minimizes the objective function \cref{eq:lagrangian}, i.e.,
\begin{equation}\label{eq:behv_update_min}
    \begin{aligned}
    \beh_{k+1} &= \min_{\beh} \sum_i \sum_s d(s) \underbrace{M^{\mu_k}_{\pi_i}(s)}_{=\text{I}}\\
    &= \min_\mu \sum_i \sum_s d(s)\underbrace{\sum_a \beh(a|s) \dfrac{\tar_i(a|s)^2}{\beh(a|s)^2} M^{\mu_k}_{\pi_i}(s,a)}_{= M^{\mu_k}_{\pi_i}(s)}.
    \end{aligned}
\end{equation}
We will omit writing $d(s)$ and assume that $s \sim d(s)$. Further, we will use the notation $\rho^i_k(s,a) = \dfrac{\tar_i(a|s)}{\beh_{k}(a|s)}$ for ease of writing. From \cref{eq:behv_update_min}, we can establish the relation,
\begin{equation}\label{eq:behv_update_min_2}
    \begin{aligned}
    \underbrace{\sum_{i,s,a}\beh_{k}(a|s) \dfrac{\tar_i(a|s)^2}{\beh_{k}(a|s)^2} M^{\mu_k}_{\pi_i}(s,a)}_{= M^{\mu_k}_{\pi_i}(s)}  \geq \sum_{i,s,a}\beh_{k+1}(a|s) \dfrac{\tar_i(a|s)^2}{\beh_{k+1}(a|s)^2} M^{\mu_k}_{\pi_i}(s,a).
    \end{aligned}
\end{equation}
Now, we will use \cref{eq:behv_update_min_2} relation to further simplify the equation and establish that variance decreases with every update step $k$. We will use the notation $\rho_{t:t+n}= \Pi_{l=0}^{n}\rho_{t+l}$ to denote the products.
\begin{equation}
    \begin{aligned}
    \sum_{i,s} M^{\mu_k}_{\pi_i}(s)&\geq \sum_{i,s,a} \beh_{k+1}(a|s) \rho^i_{k+1}(s,a)^2 M^{\mu_k}_{\pi_i}(s,a)\\
    &= \sum_{i,s,a} \beh_{k+1}(a|s) \rho^i_{k+1}(s,a)^2 \E_{a \sim \beh_{k}}[\delta_t^2 + \gamma^2 M^{\mu_k}_{\pi_i}(\stp)| s_t = s ]\\
    &= \sum_{i,s}\E_{a \sim \beh_{k+1}} \left[ (\rho^i_{t})^2 \delta_{t}^2 + \gamma^2 (\rho^i_{t})^2 \underbrace{M^{\mu_k}_{\pi_i}(\stp)}_{\text{expand this}}| s_t = s \right]\\
    &\geq \sum_{i,s}\E_{a\sim \beh_{k+1}}\left[ (\rho^i_{t})^2 \delta_t^2 + \gamma^2(\rho^i_{t})^2 \E_{a \sim \beh_{k+1}} \left[ (\rho^i_{t+1})^2 \delta_{t+1}^2 + \gamma^2 (\rho^i_{t+1})^2 M^{\mu_k}_{\pi_i}(s_{t+2})| s_{t+1} \right]| s_t = s \right]\\
    &= \sum_{i,s}\E_{a\sim \beh_{k+1}}\left[ (\rho^i_{t})^2 \delta_t^2 + \gamma^2(\rho^i_{t})^2 (\rho^i_{t+1})^2 \delta_{t+1}^2 + \gamma^4 (\rho^i_{t})^2 (\rho^i_{t+1})^2 M^{\mu_k}_{\pi_i}(s_{t+2})  | s_t = s \right]\\
    \vdots\\
    &\geq \sum_{i,s}\E_{a\sim \beh_{k+1}}\left[ \rho_{t:t}^2 \delta_t^2 + \gamma^2(\rho^i_{t:t+1})^2 \delta_{t+1}^2+ \gamma^4 (\rho^i_{t:t+2})^2\delta_{t+2}^2 + \dots   | s_t = s \right]\\
    & \geq \sum_{i,s} M^{\mu_{k+1}}_{\pi_i}(s).
    \end{aligned}
\end{equation}
\end{proof}

\subsection{When does Variance Function Exists?}\label{app:variance_existence}
Let $\matr{c_{\beh}} \in \sR^{|\cS \times \cA|}$ denote the pseudo-reward $\matr{c_{\beh}}(s,a) = \sum_{s'}P(s'|s,a) \delta^2(s,a,s')$ and $\matr{\bar P_{\beh}} \in \sR^{|\cS \times \cA \times \cS \times \cA |}$ represent the transition probability matrix $\matr{\bar P_{\beh}}(s,a,s',a') = P(s'|s,a) \mu(a'|s') \rho^2(s',a')$. The matrix form of $M_\pi^\mu$ is:
\begin{equation}\label{eq:M_exists}
\begin{aligned}
  M_\pi^\mu &= \matr{c_{\beh}} + \gamma^2 \matr{\bar P_{\beh}}M_\pi^\mu  \implies M_\pi^\mu =(I - \gamma^2 \matr{\bar P_{\beh}})^{-1} \matr{c_{\beh}}.
\end{aligned}
\end{equation}
The existence of $M_\pi^\mu$ hinges on the invertibility of matrix $(I - \gamma^2 \matr{\bar P_{\beh}})$.  \cref{lem:P_inverse_exists} establishes the existence of the above inverse using~\cref{def:spectral_radius,lem:spectral_radius,lem:inverse_matrix_exists}.

\begin{definition}\label{def:spectral_radius}\textbf{(Spectral Radius)}
The spectral radius of a matrix $\matr{A} \in \sR^{n \times n}$ is denoted by $\sr(\matr{A}) = \max(\lambda_1, \lambda_2, \dots, \lambda_n)$, where $\lambda_i$ denotes the $i^\textit{th}$ eigenvalue of $\matr{A}$. 
\end{definition}

\begin{restatable}{lemma}{spectralradius}\label{lem:spectral_radius}
The spectral radius $\sr(\matr{A})$ of a matrix $\matr{A} \in \sR^{n \times n}$ follows the relation,
$\sr(\matr{A}) \leq \|{\matr{A}}\|$,
where, $\|\matr{A}\| = \max_i \sum_j \matr{A}(i,j)$ is the infinity norm over a matrix. 
\end{restatable}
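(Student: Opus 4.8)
The plan is to invoke the standard fact that the spectral radius of a square matrix is bounded above by any operator (induced) matrix norm, instantiated here for the max-row-sum norm $\|\matr{A}\| = \max_i \sum_j \matr{A}(i,j)$, which is precisely the operator norm induced by the vector $\ell_\infty$ norm $\|x\|_\infty = \max_i |x_i|$. First I would fix an eigenvalue $\lambda$ of $\matr{A}$ together with a nonzero eigenvector $v$, so that $\matr{A} v = \lambda v$. The goal is then to show $|\lambda| \le \|\matr{A}\|$ and to finish by taking the maximum over all eigenvalues, since the eigenvalue of largest modulus realizes $\sr(\matr{A})$.

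The core step is the compatibility inequality $\|\matr{A} x\|_\infty \le \|\matr{A}\|\,\|x\|_\infty$ for every vector $x$, which I would verify by a direct row-sum estimate: for each row index $i$ we have $|(\matr{A} x)_i| = \bigl|\sum_j \matr{A}(i,j) x_j\bigr| \le \sum_j |\matr{A}(i,j)|\,|x_j| \le \|x\|_\infty \sum_j |\matr{A}(i,j)|$, and taking the maximum over $i$ gives the claim. Applying this to the eigenvector yields $|\lambda|\,\|v\|_\infty = \|\lambda v\|_\infty = \|\matr{A} v\|_\infty \le \|\matr{A}\|\,\|v\|_\infty$; dividing by $\|v\|_\infty > 0$ (valid since $v \neq 0$) gives $|\lambda| \le \|\matr{A}\|$. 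As this holds for every eigenvalue, it holds in particular for the one of largest modulus, establishing $\sr(\matr{A}) \le \|\matr{A}\|$.

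The main subtlety to flag is the treatment of absolute values rather than any genuine mathematical obstacle. As written, $\|\matr{A}\| = \max_i \sum_j \matr{A}(i,j)$ drops the $|\cdot|$ that appears in the true $\ell_\infty$ operator norm $\max_i \sum_j |\matr{A}(i,j)|$. For the use in \cref{lem:P_inverse_exists} the relevant matrix $\gamma^2 \matr{\bar P_{\beh}}$ has nonnegative entries, so the two expressions coincide and the row-sum estimate above applies verbatim. Accordingly I would either (i) note that the intended application involves a nonnegative matrix, or (ii) read the displayed norm as carrying the entrywise absolute value, in which case the bound is exactly the classical spectral-radius inequality and the argument is complete.
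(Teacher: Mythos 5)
Your proof is correct and takes essentially the same approach as the paper's: both fix an eigenpair $(\lambda, v)$, deduce $|\lambda|\,\|v\|_\infty = \|\matr{A}v\|_\infty \le \|\matr{A}\|\,\|v\|_\infty$, divide by $\|v\|_\infty > 0$, and conclude by taking the maximum over eigenvalues. The only differences are cosmetic: the paper invokes sub-multiplicativity of the matrix norm as a cited black box where you verify the row-sum compatibility estimate directly, and your observation that the stated norm omits the entrywise absolute value—harmless in the intended application of \cref{lem:P_inverse_exists}, since $\gamma^2 \matr{\bar P_{\beh}}$ is entrywise nonnegative—is a legitimate imprecision in the lemma's statement that the paper's own proof also glosses over.
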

\begin{proof}
Following the derivation from \citet{bacon2018temporal} Ph.D. thesis and work of \citet{watkins2004fundamentals}, we use the eigenvalue of a matrix to show that $\sr(\matr{A}) < \|\matr{A}\|$. We can write $\lambda \matr{x} = \matr{Ax}$, when $\lambda$ is the eigenvalue of $\matr{A}$. For any sub-multiplicative matrix norm, $\|\matr{AB}\| \leq \|\matr{A}\| \|\matr{B}\|$. Using this property,
\begin{align*}
    \|\lambda \matr{x}\| = |\lambda| \|\matr{x}\| = \|\matr{Ax}\| \leq  \|\matr{A}\|\|\matr{x}\|,\\
    |\lambda| \leq \|\matr{A}\|.
\end{align*}
The above is true for any eigenvalue $\lambda$ of $\matr{A}$. So this must also be true for the maximum eigenvalue of $\matr{A}$. Therefore, we can express,
\[\sr(\matr{A}) \leq \|A\|.\]
\end{proof}

\begin{lemma}\label{lem:inverse_matrix_exists}
When the spectral radius of $\sr(\matr{A}) < 1$, then $(I - \matr{A})^{-1}$ exits and is equal to, $(I - \matr{A})^{-1} = \sum_{t=0}^{\infty} \matr{A}^t.$
\end{lemma}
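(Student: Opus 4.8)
The statement is the classical Neumann-series (matrix geometric-series) lemma, so the plan is to mirror the scalar geometric-series argument at the matrix level. First I would record the telescoping identity
\[
(I - \matr{A})\sum_{t=0}^{N}\matr{A}^t \;=\; I - \matr{A}^{N+1} \;=\; \left(\sum_{t=0}^{N}\matr{A}^t\right)(I - \matr{A}),
\]
which holds for every $N$ and any square matrix by cancellation of consecutive terms. Writing $\matr{S}_N := \sum_{t=0}^{N}\matr{A}^t$, the goal reduces to two sub-claims: that the partial sums $\matr{S}_N$ converge to some matrix $\matr{S}$, and that $\matr{A}^{N+1} \to \matr{0}$. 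Granting both, I would let $N \to \infty$ in the identity above to obtain $(I-\matr{A})\matr{S} = I = \matr{S}(I-\matr{A})$, which simultaneously establishes that $I - \matr{A}$ is invertible and that its inverse is exactly $\matr{S} = \sum_{t=0}^{\infty}\matr{A}^t$, as claimed.

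The crux, and the step I expect to be the main obstacle, is converting the spectral-radius hypothesis $\sr(\matr{A}) < 1$ into norm decay of the matrix powers. The cleanest route is Gelfand's formula $\sr(\matr{A}) = \lim_{N\to\infty}\|\matr{A}^N\|^{1/N}$: picking $r$ with $\sr(\matr{A}) < r < 1$ gives $\|\matr{A}^N\| \le r^N$ for all sufficiently large $N$, whence $\matr{A}^N \to \matr{0}$, and the geometric bound $\sum_N \|\matr{A}^N\| < \infty$ shows $\{\matr{S}_N\}$ is Cauchy and therefore convergent, since the space of matrices is complete. This supplies both sub-claims at once.

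To stay closer to the tools already introduced, an alternative is to exhibit a submultiplicative norm $\|\cdot\|_\ast$ with $\|\matr{A}\|_\ast < 1$ (obtainable from the Jordan canonical form, since a diagonal similarity can shrink the off-diagonal block entries and every eigenvalue satisfies $|\lambda|<1$, driving the norm below $1$); then $\|\matr{A}^N\|_\ast \le \|\matr{A}\|_\ast^{\,N} \to 0$ by submultiplicativity, and the norm-comparison argument in the style of \cref{lem:spectral_radius} transfers the conclusion back to the infinity norm used in the paper. Either way, once power decay and summability are secured, the limiting argument of the first paragraph closes the proof; everything outside the matrix-analysis input (Gelfand's formula or the equivalent norm construction) is direct algebra together with a completeness-and-limit argument.
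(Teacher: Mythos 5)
Your proof is correct: the telescoping identity, the conversion of $\sr(\matr{A})<1$ into geometric decay of $\|\matr{A}^N\|$ via Gelfand's formula (or an equivalent submultiplicative norm), and the completeness-plus-limit argument together constitute the standard Neumann-series proof. The paper itself does not prove the lemma inline but simply cites Proposition A.3 of \citet{puterman2014markov}, and your argument is essentially the same classical proof that the cited reference contains, so you have merely supplied the details the paper outsources.
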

\begin{proof}
The proof for the Lemma is presented in \citet{puterman2014markov}[Proposition A.3].
\end{proof}

\pinverseexists*
\begin{proof}
Following~\cref{lem:inverse_matrix_exists}, the existence of $M$ hinges on the existence of inverse $(I - \gamma^2 \matr{\bar P_{\beh}})^{-1}$. Further, $(I - \gamma^2 \matr{\bar P_{\beh}})^{-1}$ exists if  spectral radius $\sr(\gamma^2 \matr{\bar P_{\beh}}) < 1$. Further, from \cref{lem:spectral_radius}, we know that for any given matrix $\matr{A}$, spectral radius satisfies, $\sr(\matr{A}) \leq \|A\|$. Hence, using the above two lemmas, we can express,
\begin{align*}
    \sr(\gamma^2 \matr{\bar P_{\beh}}) \leq \|{\gamma^2 \matr{\bar P_{\beh}}}\| \leq \gamma^2 \|{\matr{\bar P_{\beh}}}\|.
\end{align*}
Further, if spectral radius satisfies the below condition, then the inverse exists,
\[\sr(\gamma^2 \matr{\bar P_{\beh}}) \leq \gamma^2 \|{\matr{\bar P_{\beh}}}\| < 1.\] We expand the middle infinity norm term and get
\begin{align*}
    \max_{s,a} \sum_{s',a'}&\matr{\bar P_{\beh}}(s,a, s',a') < \frac{1}{\gamma^2}\\
    \max_{s,a} \sum_{s'}& P(s'|s,a)\sum_{a'} \beh(a'|s') \rho^2(s',a') < \frac{1}{\gamma^2}.
\end{align*}
We can further express the above condition as $\E_{a\sim \mu}\left[\rho(s,a)^2 \right] < \frac{1}{\gamma^2} , \forall s \in \cS.$
\end{proof}

\section{Experiments}\label{app:experiments}
This section provides detail about the experiments in the main paper as well as additional experiments in the Appendix. All the experiments require less than $1GB$ of memory and have used combined compute less than total 4 CPU months and 1 GPU month.

For all the experiments, we consider the two target policies with four cardinal directions left (L), right (R), up (U) and down (D) for the tabular and non-linear function approximation environments. These policies are specified as follows for every state $s \in \cS$:
\begin{equation}\label{eq:target_policies}
    \begin{split}
    \pi_1(s) &= \{L:0.175, R:0.175, U:0.25, D:0.4\}\\
    \pi_2(s) &= \{L:0.25, R:0.15, U:0.25, D:0.35\}.
\end{split}
\end{equation}

\subsection{Baselines}\label{app:baseline_in_app}
(1) \textbf{\RR/}: We used a round robin strategy to sample data from all given $n$ target policies episodically. We used Expected Sarsa to estimate all GVF value functions in parallel when a transition is given as $(s_t, a_t, s_{t+1}, c_{i=\{1, \dots n\}})$,
\begin{align*}
    Q_i(s_t,a_t) = Q_i(s_t,a_t) + \alpha \left(c_i(s_t,a_t) + \gamma \sum_a\pi_i(a'|s_{t+1}) Q_i(s_{t+1},a') - Q_i(s_{t},a_t))\right)
\end{align*}
(2) \textbf{\Mix/, \Uni/} are also evaluated using Expected Sarsa. \Mix/ is defined as,
\[\mu_{\text{Mixture}}(a|s) =  \dfrac{\sum_{i=1}^{\cN}\pi_i(a|s)}{\sum_{a'}\sum_{i=1}^{\cN}\pi_i(a'|s)}.\]

(3) \textbf{\SR/:} Based on \citep{mcleod2021continual}, \SR/ uses the summation of weight changes in the Successor Representation (SR) and reward weights to obtain the intrinsic reward for behavior policy updates. We use Expected Sarsa to learn both the SR and the Q-value function from the intrinsic reward. The behavior policy is generated using a Boltzmann policy over the learned Q function, as it empirically performs better than a greedy policy. We apply simple TD Expected Sarsa updates instead of Emphatic TD($\lambda$) as shown in Algo 2 in \cite{mcleod2021continual}. The learning rates for SR, reward weights, and behavior policy Q function are kept the same.

(4)\textbf{\BPS/:} \citep{hanna2017data} Use a Reinforce style estimator to learn $IS(\tau,\pi) = G(\tau) \Pi_{t=1}^T \frac{\pi(a_t|s_t)}{\mu(a_t|s_t)}$, as mentioned in the original paper. Since, the original work is only about single policy evaluation, we extended for multiple GVFs by updating behavior  policy as summation over $\sum_i IS(\tau,\pi_i)$. The behavior policy weight $\theta$ is updated as:
\begin{align*}
    \theta_\mu = \theta_\mu + \alpha \sum_{i=1}^n IS(\tau,\pi_i)^2\sum_{t=1}^T \nabla_\theta \log \mu_\theta(a_t|s_t).
\end{align*}

\subsection{Types of Cumulants}
\label{app:cumulants}
\begin{wrapfigure}{r}{0.27\textwidth}
\centering
  \includegraphics[width=0.25\textwidth]{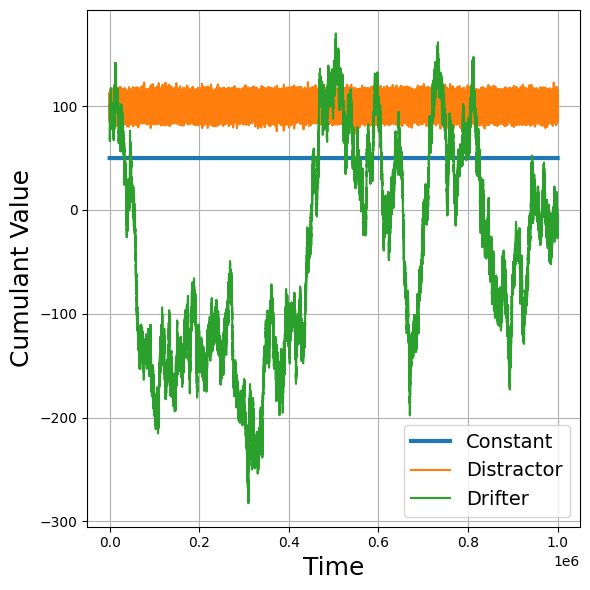}  % Replace with your image file
  \caption{Visual representation of cumulants.}
  \label{fig:cumulants}
\end{wrapfigure}
We consider three different types of cumulants similar to \cite{mcleod2021continual} as shown in 
\cref{fig:cumulants}:
\begin{itemize}
    \item \textbf{Constant:} Fixed value cumulant, $c_i^t=c_i$
    \item \textbf{Distractor:} \textit{Stationary cumulant} with reward drawn from normal distribution with fixed mean and variance, $c_i^t = \cN(\mu_i,\sigma_i)$
    \item \textbf{Drifter:} \textit{Non-Stationary cumulant} whose value change over time, $c_i^t = c_i^{t-1} + \cN(\mu_i,\sigma_i), c_i^0=100.$
\end{itemize}

\subsection{Tabular Experiments}\label{app:tabular_in_appendix}
We consider a tabular $20 \times 20$ grid environment with stochastic dynamics. We use the above target policies for the different experimental settings. The  \cref{tab:summary_tabular_perf} summarizes the averaged MSE with the same $2 \times 10^6$ samples for different experimental settings.

\begin{table}[h]
\small % Use a smaller font size
\centering
\tabcolsep=3pt\relax
\caption{\textbf{Avg. MSE Summary in Tabular Env:} Compares the average MSE across multiple GVFs in different experimental settings in Tabular environment. We compare baselines with \Ours/ at same $2 \times 10^6$  steps of learning. We show the \% improvement in \Ours/ w.r.t. to best baseline \RR/. \textit{Note: Smaller MSE indicates better performance.}}
\label{tab:summary_tabular_perf}
% Define a new column type 'a' for the column that will be colored in grey
\newcolumntype{a}{>{\columncolor{gray!30}}c}
\begin{tabular}{c|cccccca} % Vertical lines and 'a' for the grey column
\hline
\makecell{\textbf{Avg MSE $@$} \\ \textbf{$2e6$ steps}} & \textbf{\BPS/} & \textbf{\SR/} & \textbf{\texttt{UniformPol}} & \textbf{\RR/} & \textbf{\texttt{MixPol}} & \makecell{\textbf{\Ours/}\\ \textbf{(Ours)}} & \makecell{\scriptsize\textbf{\% Improvement of Ours} \\ \scriptsize{(against best baseline)}} \\ % Coloring the header cell
\hline
\scriptsize\makecell{\textbf{Distinct policies}\\ \textbf{same cumulant}} &  26.13 & 2.76 & 1.22 & 1.15 & 1.15 & \textbf{0.24} & 79\% \\
\hline
\scriptsize\makecell{\textbf{Distinct policies}\\ \textbf{distinct cumulants}} & 8.2 & 4.1 & 0.54 & 0.47 & 0.44 & \textbf{0.04} & 91\% \\
\hline
\scriptsize\makecell{\textbf{Non-Stationary cumulants}\\ \textbf{in FR env ($4M$ steps)}}  & 18.57 & 1.78 & 0.9 & 1.28 & 1.08 & \textbf{0.46} & 48\% \\
\hline
\scriptsize\makecell{\textbf{Large num of GVFs}}  & - & 53.3 & 2.66 & 2.35 & 2.64 & \textbf{1.66} & 29\% \\
\hline
\end{tabular}
\end{table}

\begin{table}[th]
\centering
\caption{\textbf{Optimized Hyperparameters:} We show the optimized hyperparameters for different Experimental Settings. $\alpha_Q$ is learning rate for value function. $\alpha_M$ is learning rate for variance function.}
\label{tab:hyperparameters}
\renewcommand{\arraystretch}{0.8}
\setlength{\tabcolsep}{2pt}
\begin{tabular}{cccccc}
\hline
\textbf{Exp. Settings} & \textbf{\makecell{distinct policies \\ identical cumulants}} & \textbf{\makecell{distinct policies\\ distinct cumulants}} &\textbf{\makecell{large scale\\ $40$ GVF eval}} & \textbf{\makecell{non-linear\\func approx}} & \textbf{\makecell{non-stationary  \\ cumulant in FR}} \\ \hline
\makecell{\Ours/\\ (Ours)} & \begin{tabular}[c]{@{}c@{}}$(\alpha_Q=0.25,$\\ $\alpha_M=0.8)$\end{tabular} & \begin{tabular}[c]{@{}c@{}}$(\alpha_Q=0.1,$\\ $\alpha_M=0.8)$\end{tabular} & \begin{tabular}[c]{@{}c@{}}$(\alpha_Q=0.5,$\\ $\alpha_M=0.95)$\end{tabular} & \begin{tabular}[c]{@{}c@{}}$(\alpha_Q=5e-3,$\\ $\alpha_M=5e-3)$\end{tabular}& \begin{tabular}[c]{@{}c@{}}$(\alpha_Q=0.5,$\\ $\alpha_M=0.8)$\end{tabular} \\ \hline
\RR/ & 0.95 & 0.8 & 0.8 & $5e-4$ & 0.8\\ \hline
\Mix/ & 0.95 & 0.8 & 0.8 & $5e-4$ & 0.8\\ \hline
\Uni/ & 0.95 & 0.8 & 0.8 & - & 0.8\\ \hline
\SR/ & 0.25 & 0.5 & 0.25 & $1e-3$ & 0.8\\ \hline
\BPS/ & 0.5 & 0.8 & - & - & 0.8 \\ \hline
\end{tabular}
\end{table}

\begin{figure}[th]
		\begin{center}
        \begin{subfigure}[b]{0.23\textwidth}
        \centering
            \includegraphics[width=\textwidth]{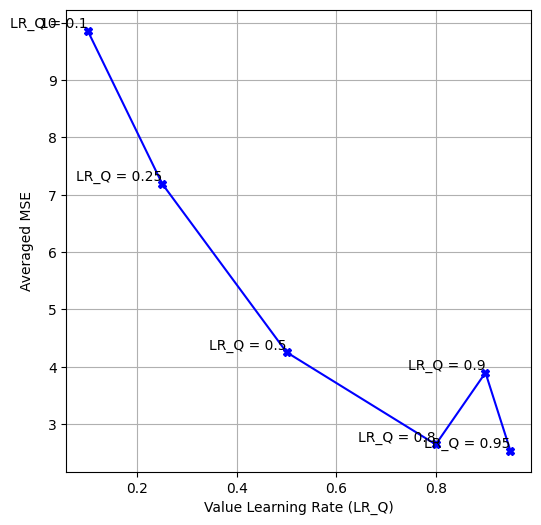}
            \caption[]%
            {{\small \RR/}} 
        \end{subfigure}
        \begin{subfigure}[b]{0.23\textwidth}
        \centering
            \includegraphics[width=\textwidth]{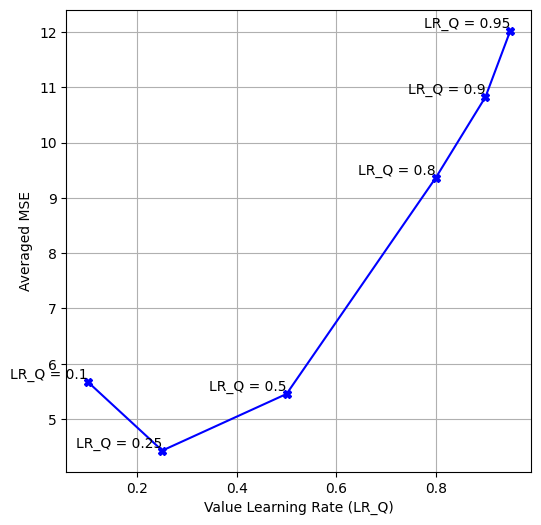}
            \caption[]%
            {{\small \SR/}}
        \end{subfigure}
        \begin{subfigure}[b]{0.23\textwidth}
        \centering
        \includegraphics[width=\textwidth]{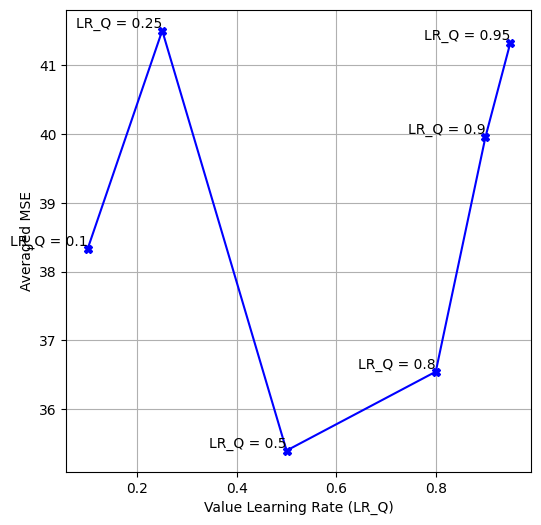}
            \caption[]%
            {{\small \BPS/}}    
            \label{fig:PuddleContTrajUS}
        \end{subfigure}
        \begin{subfigure}[b]{0.23\textwidth}
        \centering
        \includegraphics[width=\textwidth]{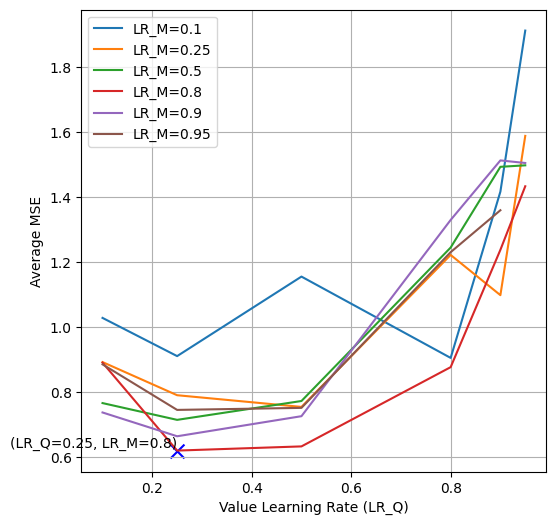}
            \caption[]%
            {{\small \Ours/}}    
            \label{fig:PuddleContTrajS}
        \end{subfigure}
        \caption{\textbf{Impact of Learning Rate on Averaged MSE in \Seta scenario}: Demonstrate the effect of changing minimum value of learning rate on the averaged MSE (performance averaged over $10$ runs) across GVFs. The optimal hyperparameter is selected based on the least MSE in these plots. LR\_Q: value learning rate, LR\_M: variance learning rate.}
    \label{fig:lr_sens_mp_sg}
    \end{center}
\end{figure}

\paragraph{Hyperparameter Tuning:} In our experiments, we use linearly decaying learning rates that starts with initial value of  $1.0$ and gradually decreased to an optimized minimum value within $500K$ steps of environmental interactions. We used different learning rates for value and variance function in \Ours/. The minimum learning rate parameter was swept within $\{0.1, 0.25, 0.5, 0.8, 0.9, 0.95\}$ for both value and variance function. The optimal minimum learning rate was determined based on the one achieving the lowest average Mean Squared Error (MSE) after $800K$ sample interactions. \textbf{This hyperparameter tuning approach was consistently applied for all algorithms including  baselines}. \cref{fig:lr_sens_mp_sg} shows the sensitivity analysis of varying learning rates for value functions (all baselines) and variance functions (our method) with the averaged MSE performance in \Seta. The learning rate resulting in the lowest MSE was selected as optimal. In \cref{tab:hyperparameters}, we show the optimal hyperparameters for the four experimental settings discussed in the paper earlier (refer\textit{ Experimental Settings} in \cref{sec:Experiments}).

\subsubsection{\Seta}
In tabular $20\times20$ grid, we consider distinct target policies with identical distractor cumulant $r=\cN(\mu=100,\sigma=5)$. In \cref{fig:single_goal_grid_stochastic_ind_mse}, we depict the individual MSE over $25$ runs for both the GVFs $(\pi_1,c), (\pi_2,c)$; showing decreased MSE for \Ours/ compared to the baselines.
\begin{figure}[h]
\centering
        \includegraphics[width=0.25\linewidth]{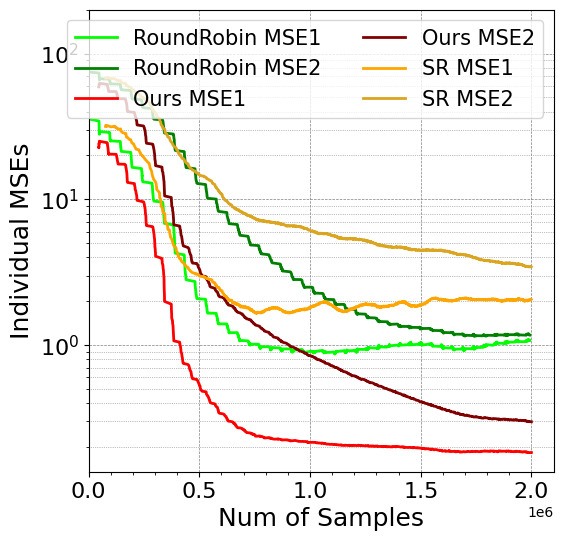}
    \caption{\textbf{\Seta  in 20x20 Grid}: Averaged MSE over $25$ runs for two GVFs $(\pi_1,c), (\pi_2,c)$ with same cumulant. We show individual $\text{MSE}_1, \text{MSE}_2$. \Ours/ shows lower MSE compared to other baselines.}
    \label{fig:single_goal_grid_stochastic_ind_mse}
\end{figure}

\subsubsection{\Setb}
\label{app:setb}
In tabular $20\times20$ grid, we consider distinct target policies and distinct distractor cumulants $c_1=\cN(\mu=100,\sigma=5)$ placed at top-left corner and $c_2=\cN(\mu=50,\sigma=5)$ placed in top-right corner. \cref{fig:multi_goal_grid_stochastic_app} shows the individual MSE for all algorithms and the estimated variance in \Ours/ for both the GVFs. 
\begin{figure}[h!]
    \centering
    % First row
    \begin{subfigure}{0.3\textwidth}
        \includegraphics[width=0.94\linewidth]{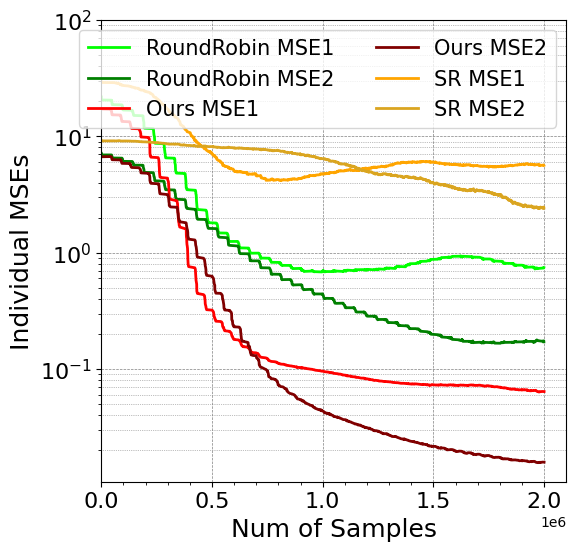}
        \caption{MSE for individual GVFs}
    \end{subfigure}
    \begin{subfigure}{0.3\linewidth}
        \includegraphics[width=\textwidth]{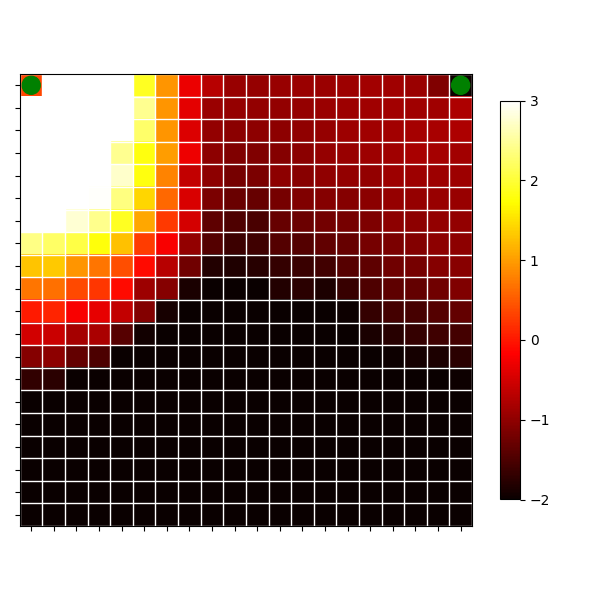}
        \caption{Var $\text{GVF}_1$(\Ours/)}
    \end{subfigure}
    \begin{subfigure}{0.3\linewidth}
        \includegraphics[width=\textwidth]{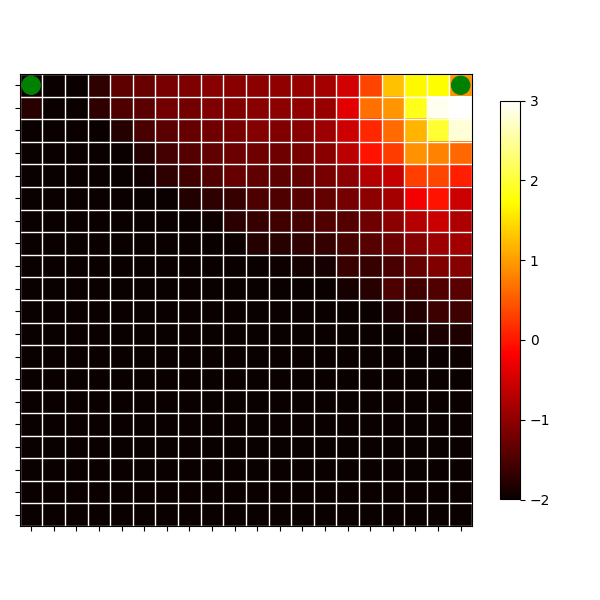}
        \caption{Var $\text{GVF}_2$(\Ours/)}
    \end{subfigure}
    \caption{\textbf{\Setb in Grid env}: Evaluate two distinct GVFs $(\pi_1,c_1)$ and $(\pi_2,c_2)$ averaged over 25 runs. Compared baselines -- \RR/, \Mix/, \Uni/, \SR/, \BPS/ with \Ours/. Green dots show GVF goals. (a) Individual $\text{MSE}_1$ for GVF $(\pi_1,c_1)$, and $\text{MSE}_2$ for GVF $(\pi_2,c_2)$. (b,c) Estimated variance $\hat M^{c_1}_{\pi_1}, \hat M^{c_2}_{\pi_2}$ in \Ours/. Variance plots show log-scale empirical values; most areas appear black, due to their relatively small magnitude compared to high variance regions.  \textit{The color bar uses log scale \& vibrant colors indicate higher values.}}
    \label{fig:multi_goal_grid_stochastic_app}
\end{figure}

% In \textbf{Setting 2}, we evaluated semi-greedy target policies $(\pi_1, c_1)$ and $(\pi_2, c_2)$ within a 20x20 grid. These policies, designed with a bias towards top-left and top-right goals respectively, follow predefined action probabilities for each state across the grid. Despite maintaining the cumulant distribution from prior experiments, our method, \Ours/, exhibited comparable MSE to the RoundRobin (\RR/) baseline but required more samples to converge. This outcome can be attributed to the near-greedy nature of the target policies, which inherently guided \RR/ along near-optimal paths, enabling it to achieve lower MSE with fewer samples.

\paragraph{\textbf{Semi-greedy $\pi$ for \Setb}:} We evaluated semi-greedy target policies with distinct cumulants, $(\pi_1, c_1)$ and $(\pi_2, c_2)$ within a 20x20 grid. The target policies are designed with a bias towards top-left and top-right goals respectively,
\begin{equation}
    \begin{aligned}\label{eq:target_policies_semi_greedy}
    \pi_1(s) &= \{L:0.4, R:0.1, U:0.4, D:0.1\} \forall s \in \cS\\
    \pi_2(s) &= \{L:0.1, R:0.4, U:0.4, D:0.1\} \forall s \in \cS.
\end{aligned}
\end{equation}
We keep the same cumulants same as in previous experiment.  \cref{fig:semi_greedy_multi_goals} compares the average MSE performance, where \Ours/ exhibits comparable MSE to \RR/ baseline but requires more samples to converge. This outcome can be attributed to the near-greedy nature of the target policies, which inherently guides \RR/ along goal directed trajectories, enabling it to achieve nearly accurate predictions with fewer samples. The optimal hyperparameters for \RR/, \Uni/ and \Mix/ is $\alpha_Q=0.95$. We used $\alpha_Q=0.5, \alpha_M=0.8$ for ours \Ours/. Another baseline \SR/ has $\alpha_Q=0.8$ and \BPS/ as $\alpha_Q=0.9$ as optimal hyperparameters.
\begin{figure}[h]
    \centering
    % First row
    \begin{subfigure}{0.23\linewidth}
        \includegraphics[width=\textwidth]{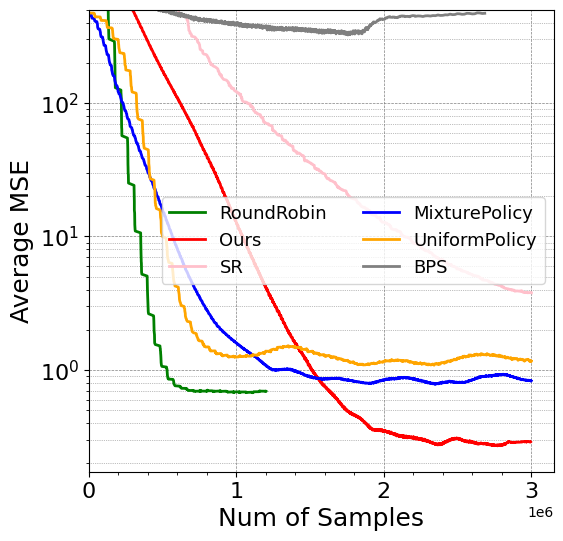}
        \caption{Averaged MSE}
    \end{subfigure}
    \begin{subfigure}{0.25\textwidth}
        \includegraphics[width=\linewidth]{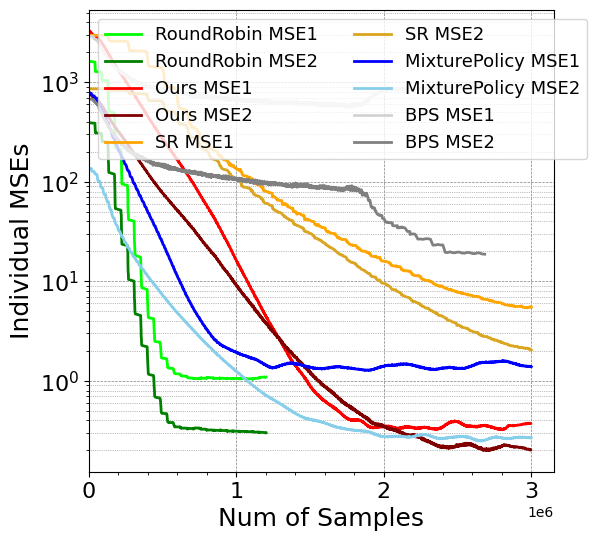}
        \caption{$\text{MSE}_1, \text{MSE}_2$}
    \end{subfigure}
    \caption{\textbf{Semi-greedy target policies in \Setb}: Analysis of MSE averaged over $25$ runs with semi-greedy target policies. (a) Averaged MSE (b) $\text{MSE}_1, \text{MSE}_2$. We observe a slower convergence of \Ours/ as compared to baselines like \RR/, \Mix/ due to target policies being semi-greedy.}
    \label{fig:semi_greedy_multi_goals}
\end{figure}

\subsubsection{Ablation Study on Effect in Performance on using Poor Feature Approximator}\label{app:crude_approx_app}
In this section, we study the effect of using degraded approximations or feature quality on the performance metrics. We conducted an ablation study in a 20x20 grid with two distinct \textit{distractor} GVFs with cumulants, $c_1 = \cN(\mu=100,\sigma=5)$ placed on the top-left corner and $c_2 = \cN(\mu=50,\sigma=5)$ on the top-right corner. We reduced the state space into 10x20 and 5x20 feature grids (grouping factors of 2 and 4, respectively), and compared results against the original setup (no approximation, factor = 1). As shown in \cref{fig:crude_approx}, the MSE increases as the feature quality deteriorates. Despite this, \Ours/ outperforms \RR/ and \Mix/, though with very poor approximations (factor = 4), the \Uni/ performs better due to inaccurate variance estimates. These results demonstrate that \Ours/ is robust with moderately coarse approximations but can degrade with significantly poor feature representations, as expected. Further, these results are strengthened by the performance of \Ours/ in Mujoco environments \cref{sec:mujoco}.

\begin{figure}[h]
    \centering
    \includegraphics[width=0.8\textwidth]{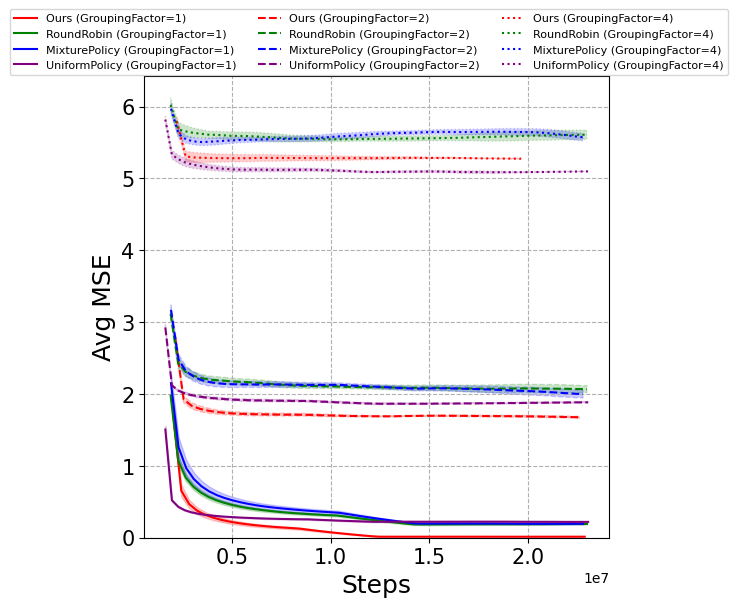}
    \caption{\textbf{Impact of Feature Approximation on MSE:} Averaged MSE over 10 runs with standard error in tabular environment. Increasing \textit{GroupingFactor} indicates coarser feature mapping (more states mapped to the same feature). As expected, overall MSE increases with coarser mapping. \Ours/ outperforms baselines given a reasonable feature approximator.}
    \label{fig:crude_approx}
\end{figure}

\subsubsection{\Sete}
\label{app:sete}
In complex FourRooms environment, we consider two distinct target policies in \cref{eq:target_policies} and different cumulants -- stationary \textbf{distractor} with $\cN(\mu=100,\sigma=2)$ in top-left room, non-stationary  \textbf{drifter} signal of $\sigma=0.5$ in top-right room. \cref{fig:early_m_fr,fig:later_m_fr} shows the change in estimated variance $M$ of \Ours/ from early learning steps to later steps (vibrant color shows higher numerical value). We experimented with changing the value of $\sigma$ of drifter cumulant to see the effect on MSE. In \cref{fig:drifter_sigma_graph} we observe that MSE increases with increasing the value of driftness ($\sigma$) over time.

\begin{figure}[h]
    \centering
    \begin{subfigure}{0.22\linewidth}
        \includegraphics[width=\textwidth]{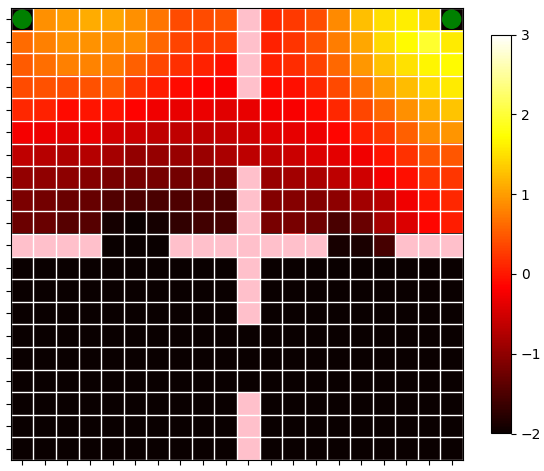}
        \caption{ Avg $M$ in early learning stage}
        \label{fig:early_m_fr}
    \end{subfigure}
    \begin{subfigure}{0.22\textwidth}
        \includegraphics[width=\linewidth]{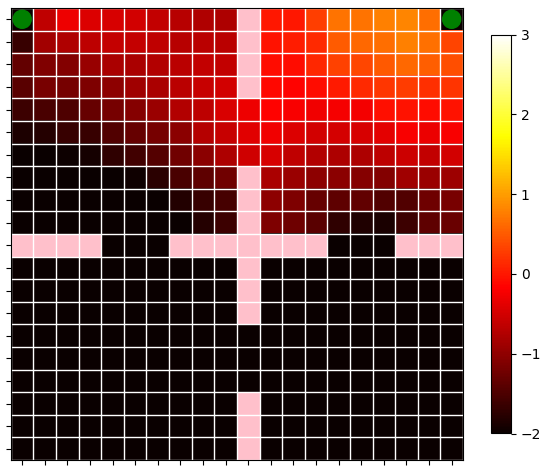}
        \caption{Avg $M$ in later learning state}
        \label{fig:later_m_fr}
    \end{subfigure}
    \hspace{0.05\linewidth}
    \begin{subfigure}{0.22\textwidth}
        \includegraphics[width=\linewidth]{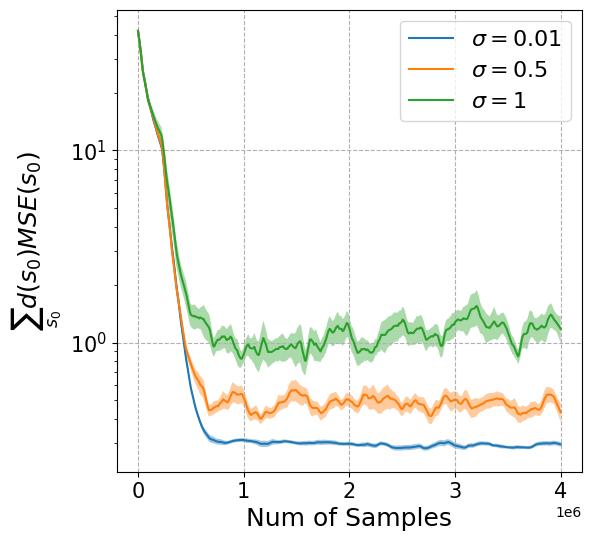}
        \caption{Avg MSE with different $\sigma$ of drifter cumulant}
        \label{fig:drifter_sigma_graph}
    \end{subfigure}
    \caption{\Sete: We placed a stationary distractor cumulant in the top-left room and a non-stationary drifter cumulant in the top-right room. (a \& b) show the change in estimated variance $\hat{M}$ over time, highlighting the effectiveness of \Ours/ in tracking the non-stationary cumulant placed in top-right corner, later in learning process over stationary cumulant (top-left). (c) shows the average MSE for \Ours/ with different levels of driftness ($\sigma$) in the cumulant value; higher driftness leads to higher MSE.}
    \label{fig:fr_env_variance}
\end{figure}

\subsubsection{\Setc}\label{app:large_scale_exp}
We evaluate \Ours/ ability to handle a large number of GVFs. We examine four target policies ($\pi_{n \in {1 \dots 4}}$), each aligned with a cardinal direction, and ten cumulants ($c_{m \in {1 \dots 10}}$), aiming to predict $40$ GVF combinations ($v_{\pi_{1 \dots 4}}^{c_1} \dots v_{\pi_{1 \dots 4}}^{c_{10}}$). Each GVF is associated with a state space region (``goal''), uniformly sampled and assigned \textbf{constant} cumulant value ranging $[50,100]$, demonstrated in \cref{fig:cum_value_scaled_Exp}. In this setting, we considered $4$ target policies in the four cardinal directions, namely:
\begin{equation*}
    \begin{aligned}\label{eq:target_policies_distinct_GVFs}
    \pi_{N}(s) &= \{L:0.1, R:0.1, U:0.7, D:0.1\} \forall s \in \cS\\
    \pi_{E}(s) &= \{L:0.1, R:0.7, U:0.1, D:0.1\} \forall s \in \cS\\
    \pi_{S}(s) &= \{L:0.1, R:0.1, U:0.1, D:0.7\} \forall s \in \cS\\
    \pi_{W}(s) &= \{L:0.7, R:0.1, U:0.1, D:0.1\} \forall s \in \cS.
\end{aligned}
\end{equation*}

\begin{figure}[h]
\centering
\includegraphics[width=0.2\textwidth]{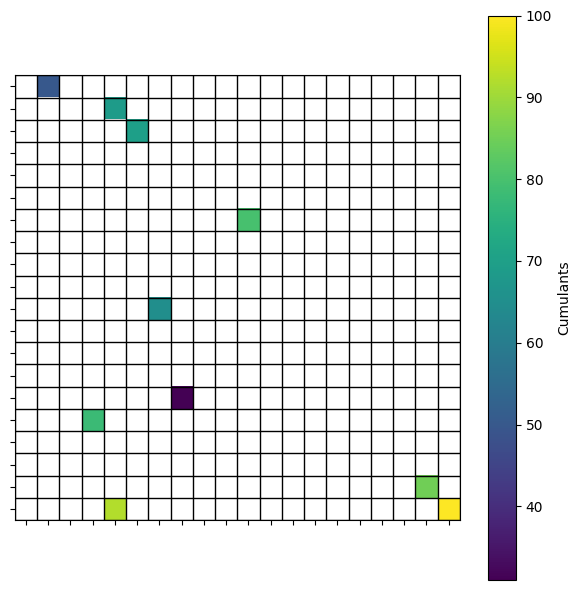}
        \caption{$10$ different cumulants for \Setc in $20\times20$ grid. The color  depict the cumulant empirical value.}
\label{fig:cum_value_scaled_Exp}
\end{figure}

\subsubsection{Ablation: IS ratios vs Expected Sarsa Update}
In FourRooms environment with distractor and drifter cumulants (two distinct GVFs), we compare the following off-policy update styles:
(1) Off-policy TD updates using IS $\rho$ correction,
\begin{align*}
    Q(s_t,a_t) &= Q(s_t,a_t) + \alpha_Q(\underbrace{c_t + \gamma \rho_{t+1} Q(s_{t+1},a_{t+1}) -  Q(s_t,a_t)}_{=\delta_Q})\\
    M(s_t,a_t) &= M(s_t,a_t) + \alpha_M(\delta_Q^2 + \gamma^2  \rho_{t+1}^2 M(s_{t+1},a_{t+1}) -  M(s_t,a_t)
\end{align*}
and (2) Expected Sarsa update in \cref{eq:sarsa-q-update,eq:sarsa-m-update}. In \cref{fig:FR_rho_exp_sarsa}, we observe that Expected Sarsa leads to lower MSE, hence we use Expected Sarsa for all the TD updates in \textit{all the algorithms} including baseline for further update stability.
\begin{figure}[h]
    \centering
    \includegraphics[width=0.25\linewidth]{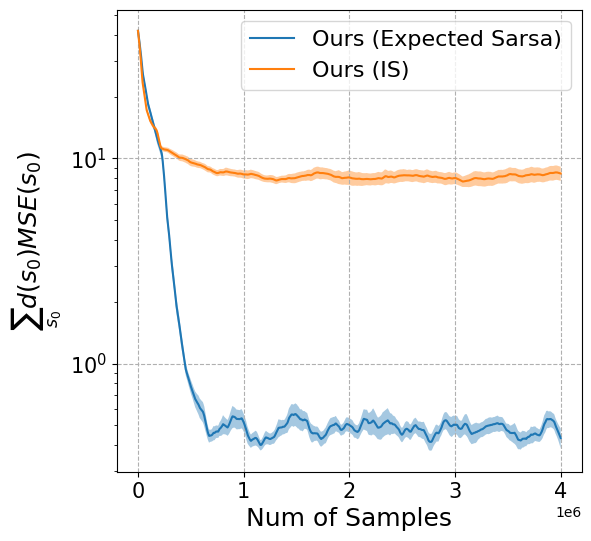}
    \caption{\textbf{IS vs Expected Sarsa update in FR env:} We show the averaged MSE (over $25$ runs) in Fourrooms by doing off-policy IS corrections in TD updates and off-policy Expected Sarsa in \Ours/ algorithm. Expected Sarsa leads to smaller MSE and faster convergence.}
    \label{fig:FR_rho_exp_sarsa}
\end{figure}

\subsection{Continuous State Environment with Non-linear Function Approximation}\label{app:cont_appendix}

\paragraph{Continuous Environment.} We extend the tabular GridWorld environment to a continuous state space, similar to the approach by \cite{mcleod2021continual}. The environment is a square of dimension $1 \times 1$ with four discrete actions. We evaluate two GVFs: the first GVF has a cumulant at the top-left corner, $c_1 = \mathcal{N}(\mu=100, \sigma=5)$, and the second at the top-right corner, $c_2 = \mathcal{N}(\mu=50, \sigma=5)$. The target policies are consistent with those used in the tabular environment. The agent receives a zero cumulant signal elsewhere and moves $0.025$ units in the selected direction with added uniform noise $\mathcal{U}[-0.01, 0.01]$. Episodes start randomly, excluding a $0.05$ radius from the goal state, and end after 500 steps or upon reaching a $0.05$ radius from the goal.

\cref{fig:mse_2d_world_individual} presents the individual MSE for both GVFs under standard Experience Replay and PER. Our method, \Ours/, consistently achieves lower MSE compared to baselines. \cref{fig:cont_value_error} shows the absolute GVF value prediction error with PER for both the baseline \RR/ and \Ours/. \cref{fig:cont_env_variance} illustrates the estimated variance from each GVF, underscoring the necessity for a sampling strategy that prioritizes high-variance return areas to reduce data interactions and ultimately reducing the variance and MSE. \cref{fig:traj_2d_world} depicts the trajectories sampled from baseline \RR/ and \Ours/. \cref{tab:mse_in_cont_env} summarizes the performance of various algorithms in this continuous environment.

\paragraph{Computation of True GVF Values.} The true GVF values in a continuous environment are computed using a Monte Carlo (MC) method. The continuous state space is discretized into a grid, with an initial state sampled from each grid cell. We calculate the average discounted return over $200,000$ trajectories following policy $\pi_i$ with cumulant $c_i$. The mean squared error (MSE) between the estimated and true GVF values is then computed using these discretized states, expressed as $\mathbb{E}_i\left[\sum_s \left(V_{\pi_i}^{c_i}(s) - \hat{V}_{\pi_i}^{c_i}(s)\right)^2\right]$ for all algorithms.

\paragraph{Network Architecture.} We use distinct deep networks for learning value $Q$ and variance $M$. Both networks share a similar architecture, with a shared feature extractor for input states and separate output heads for each GVF, producing multidimensional outputs for both value and variance. The variance network includes a Softplus layer before each head's output to ensure positive numerical values.

\begin{table}[th]
\small % Use a smaller font size
\centering
\tabcolsep=3pt\relax
\caption{\textbf{Avg. MSE Summary for Continuous Env.}: Averaged MSE across two GVFs  for different algorithms in the continuous environment. \Ours/ performance measured against others using \textbf{standard} and \textbf{prioritized experience replay} after $1 \times 10^6$ learning steps.  \textit{Note: Smaller MSE indicates better performance.}}
\label{tab:mse_in_cont_env}
\newcolumntype{a}{>{\columncolor{gray!30}}c}
\begin{tabular}{c|cccca} % Vertical lines
\hline
\makecell{Avg MSE\\ $@ 1e6$ steps} & \textbf{\SR/} & \textbf{\Mix/} & \textbf{\RR/} & \makecell{\textbf{\Ours/}\\ (Ours)} & \scriptsize \makecell{\textbf{\% Improvement of \Ours/}\\ {\textbf{(against best baseline)}}}\\
\hline
\makecell{\textbf{Standard}\\ \textbf{Replay Buffer}} & 21.7 & 18.25 & 16.78 & \textbf{5.19} & 69\% \\

\makecell{\textbf{Prioritized}\\\textbf{Exp. Replay}} & 112 & 14.7 & 11.62 & \textbf{3.87} & 66\% \\
\hline
\end{tabular}
\end{table}

\begin{figure*}[h]
    \centering
    \begin{subfigure}{.25\linewidth}
        \centering
        \includegraphics[width=\textwidth]{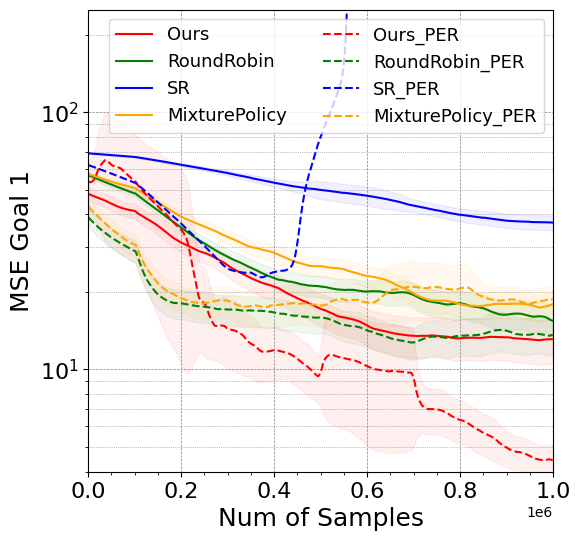}
        \caption{MSE Left Goal}
        \label{fig:mse1_sub1}
    \end{subfigure}%
    \begin{subfigure}{.25\linewidth}
        \centering
        \includegraphics[width=\textwidth]{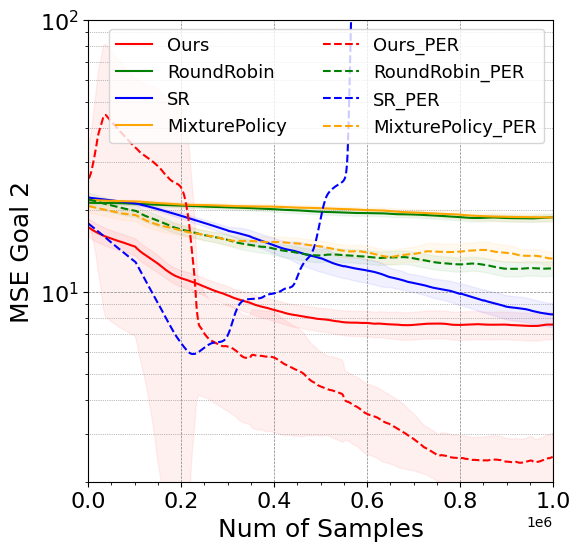}
        \caption{MSE Right Goal}
        \label{fig:mse2_sub1}
    \end{subfigure}
    \caption{\textbf{Individual MSE in Continuous Env.}: Compare the MSE metrics in baselines - \RR/, \Mix/, \SR/ and \Ours/ (averaged over $50$ runs with standard errors) for both standard \textbf{Experience Replay Buffer} (solid lines) and with \textbf{Priority Experience Replay} (PER) (dotted lines). \Ours/ demonstrates lower MSE with both types of replay buffers. PER generally reduces MSE across all algorithms, except for \SR/.
    }
    \label{fig:mse_2d_world_individual}
\end{figure*}

\begin{figure}[h!]
    \centering
    % First row    
    \begin{subfigure}{0.3\linewidth}
        \includegraphics[width=\textwidth]{images/2goals_func_approx/val_diff_log_td/LogAbsDiffMeanQ_step_900000.png}
        \caption{Avg $\hat V^{c_i}_{\pi_i}$ error(\RR/)}
        % \caption{$\E_i[V^{c_i}_{\pi_i} - \hat V^{c_i}_{\pi_i}]$(\RR/)}
    \end{subfigure}
    % \hfill
    \begin{subfigure}{0.3\linewidth}
        \includegraphics[width=\textwidth]{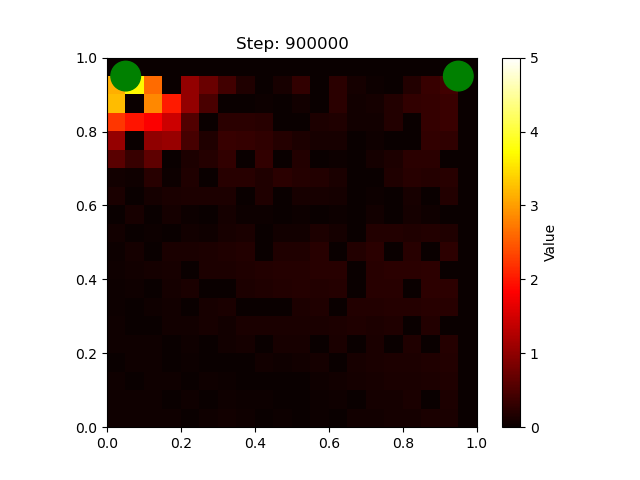}
        \caption{$\hat V^{c_1}_{\pi_1}$ error(\RR/)}
        % \caption{ $|V^{c_1}_{\pi_1} - \hat V^{c_1}_{\pi_1}|$(\RR/)}
    \end{subfigure}
    \begin{subfigure}{0.3\linewidth}
        \includegraphics[width=\textwidth]{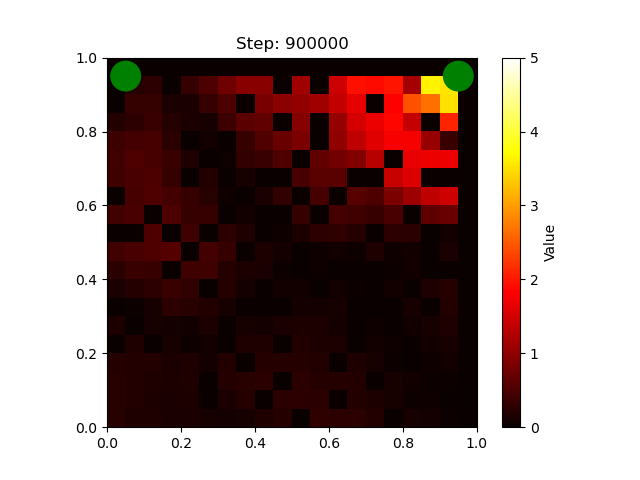}
        \caption{$\hat V^{c_2}_{\pi_2}$ error(\RR/)}
        % \caption{$|V^{c_2}_{\pi_2} - \hat V^{c_2}_{\pi_2}|$(\RR/)}
    \end{subfigure}
    % 2nd row
    \begin{subfigure}{0.3\textwidth}
        \includegraphics[width=\linewidth]{images/2goals_func_approx/val_diff_log_ours/LogAbsDiffMeanQ_step_900000.png}
        \caption{Avg $\hat V^{c_i}_{\pi_i}$ error(\Ours/)}
        % \caption{$\E_i[V^{c_i}_{\pi_i} - \hat V^{c_i}_{\pi_i}]$(\Ours/)}
    \end{subfigure}
    % \hfill
    \begin{subfigure}{0.3\textwidth}
        \includegraphics[width=\linewidth]{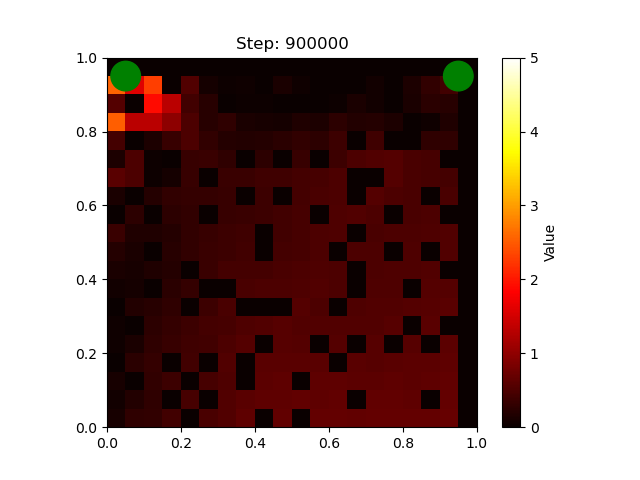}
        \caption{$\hat V^{c_1}_{\pi_1}$ error(\Ours/)}
        % \caption{$|V^{c_1}_{\pi_1} - \hat V^{c_1}_{\pi_1}|$(\Ours/)}
    \end{subfigure}
    \begin{subfigure}{0.3\textwidth}
        \includegraphics[width=\linewidth]{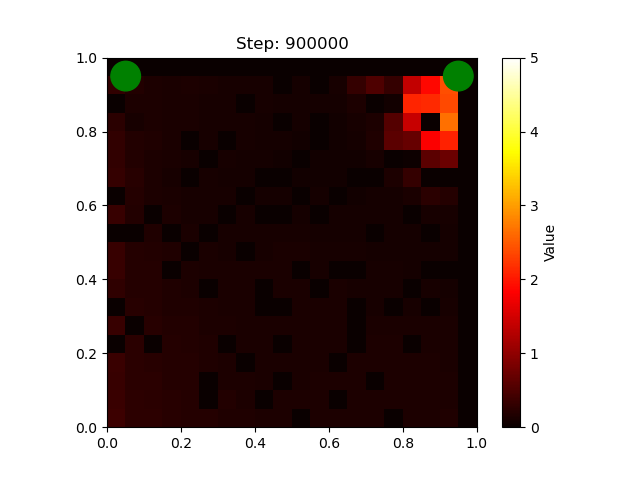}
        \caption{$\hat V^{c_2}_{\pi_2}$ error(\Ours/)}
        % \caption{$|V^{c_2}_{\pi_2} - \hat V^{c_2}_{\pi_2}|$ (\Ours/)}
    \end{subfigure}
    \caption{\textbf{Value Prediction Errors in Continuous Env}: Compares log-scale absolute errors between actual and predicted values for two GVFs. Top row: \RR/ baseline errors; Bottom row: \Ours/ results at equivalent steps. \textbf{(Col 1)}: Mean error, \textbf{(Col 2)}: Error in GVF 1, \textbf{(Col 3)}: Error in GVF 2. \Ours/ specially achieves smaller errors in areas where \RR/ has higher MSE, due to the focus on reducing overall MSE (indicated by lighter colors).}
    \label{fig:cont_value_error}
\end{figure}

\begin{figure}[h!]
    \centering
    % First row    
    \begin{subfigure}{0.25\linewidth}
        \includegraphics[width=\textwidth]{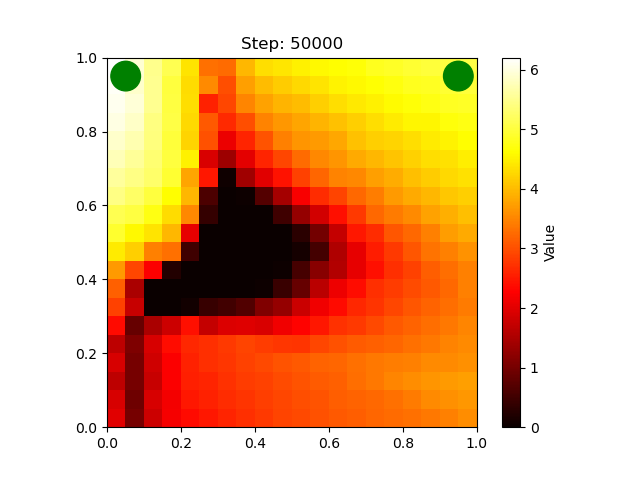}
        \caption{$\E_{i=\{1,2\}}[M^i(s)]$}
    \end{subfigure}
    % \hfill
    \begin{subfigure}{0.25\linewidth}
        \includegraphics[width=\textwidth]{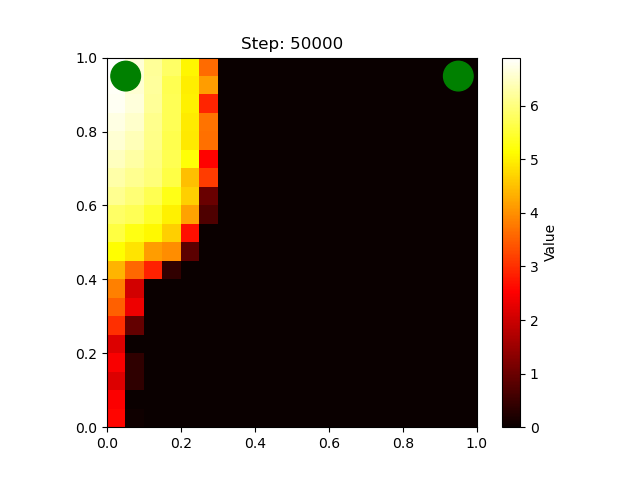}
        \caption{ $M^1(s)$}
    \end{subfigure}
    \vspace{0.1cm}
    % 2nd row
    \begin{subfigure}{0.25\textwidth}
        \includegraphics[width=\linewidth]{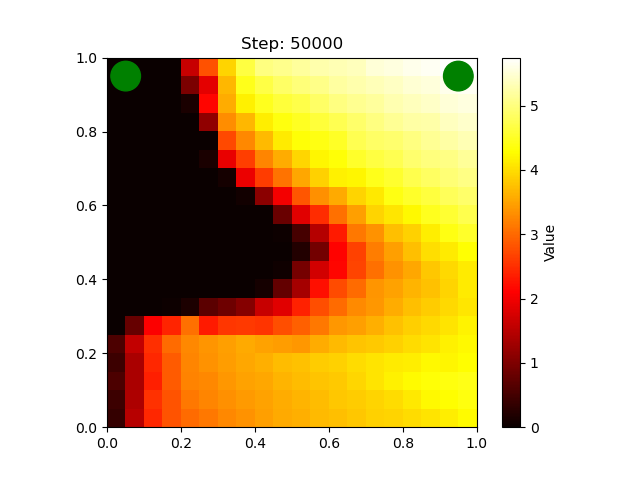}
        \caption{$M^2(s)$}
    \end{subfigure}
    \caption{\textbf{Estimated Variance in Continuous Env}: The two GVF goals are depicted in Green. We show the estimated variance $M$ (log values) over states from \Ours/ method highlighting the motivation for behavior policy to visit high variance areas. (a) Mean variance, (b) Variance for left goal GVF, (c) variance for right goal GVF.  These variance plots show log scale empirical values; most areas appear black due to their relatively small magnitude compared to high variance regions. }
    \label{fig:cont_env_variance}
\end{figure}

\begin{figure}[h!]
		\begin{center}
        \begin{subfigure}[b]{0.23\textwidth}
        \centering
            \includegraphics[width=\textwidth]{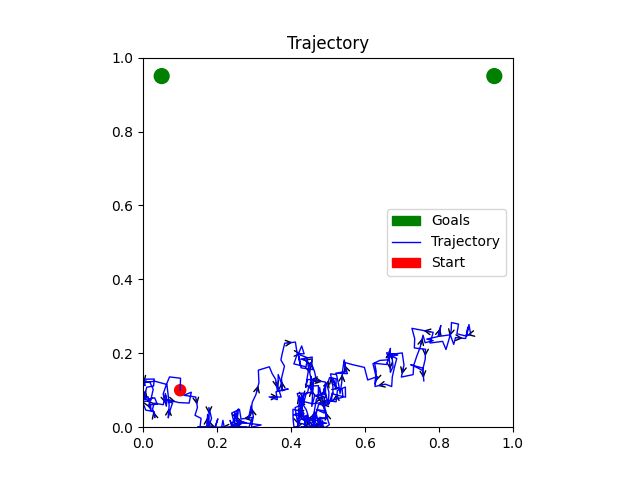}
            \caption[]%
            {{\small \RR/ $\tau_1$}}   
            % \label{fig:PuddleDiscreteTrajUS}
        \end{subfigure}
        \begin{subfigure}[b]{0.23\textwidth}
        \centering
            \includegraphics[width=\textwidth]{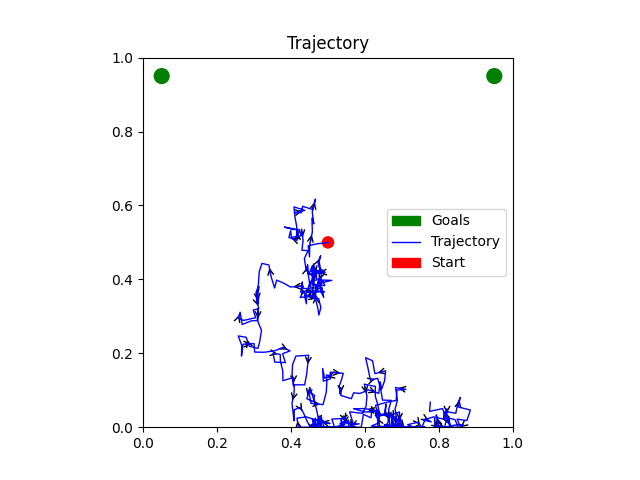}
            \caption[]%
            {{\small \RR/ $\tau_2$}}    
            % \label{fig:PuddleDiscreteTrajS}
        \end{subfigure}
        \begin{subfigure}[b]{0.23\textwidth}
        \centering
        \includegraphics[width=\textwidth]{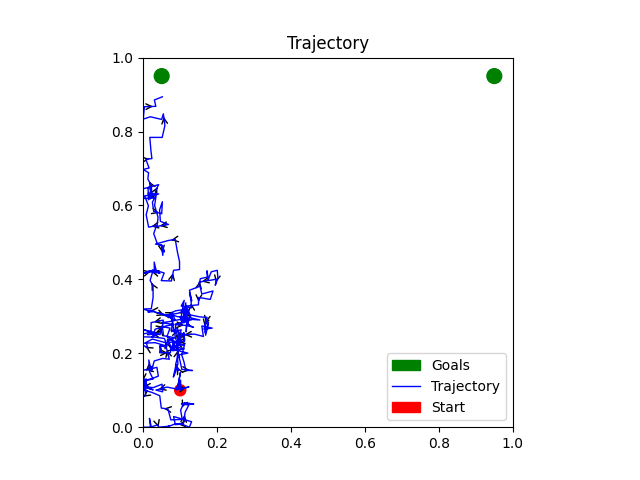}
            \caption[]%
            {{\small \Ours/ $\tau_1$}}    
            % \label{fig:PuddleContTrajUS}
        \end{subfigure}
        \begin{subfigure}[b]{0.23\textwidth}
        \centering
        \includegraphics[width=\textwidth]{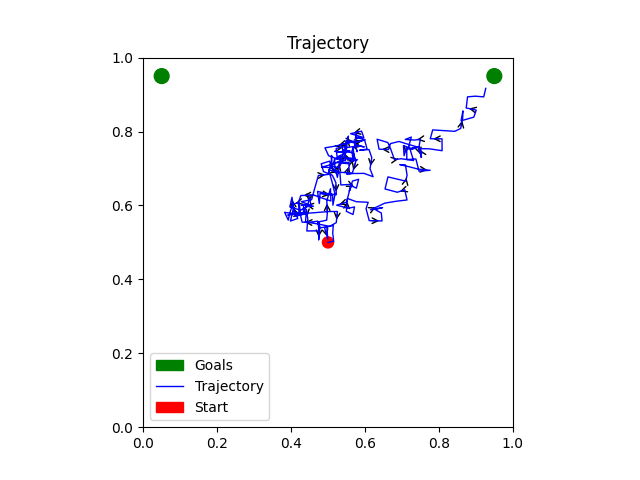}
            \caption[]%
            {{\small \Ours/ $\tau_2$}}    
            % \label{fig:PuddleContTrajS}
        \end{subfigure}
        \caption{\textbf{Sampled trajectories in Continuous Env}: \Ours/ generates trajectories which reduces the overall variance, thus minimizing the total MSE. Contrary, \RR/  collects data according to given target policies. Green dots show GVF goals and red depicts the start state.}
    \label{fig:traj_2d_world}
    \end{center}
\end{figure}

\subsection{Mujoco Environment with Continuous State-Action Tasks}\label{app:mujoco_exp}
We conducted additional experiments using the DM-Control suite in the Mujoco environment, focusing on the \textit{Walker} and \textit{Cheetah} domains. For the \textit{Walker} domain, we defined two distinct GVFs: \textit{walk} and \textit{flip}. In the  \textit{Cheetah} domain, we evaluated two GVFs: \textit{walk} and \textit{run}. To handle the continuous action space in these environments, we leveraged policy gradient methods, which are essential when working with continuous actions, as value-based methods like Q-learning are not directly applicable. 

Our method can be incorporated with any policy gradient (PG) algorithm. For these experiments, we used Soft Actor-Critic (SAC)\citep{haarnoja2018soft} which provides stability in such settings. SAC uses an entropy regularizer to encourage \textbf{exploration}, where the regularization coefficient $\alpha$ is learned adaptively. This allows the agent to balance exploration and exploitation effectively. We present the SAC-based \Ours/ in \cref{algo:sac_gvf} below.

We introduced a separate variance network to estimate the variance of the return (M critic). The behavior policy interacts with the environment, collects samples, and updates the Q-value and M-variance networks. The policy is then updated to minimize the mean squared error (MSE) objective by minimizing the variance. To ensure that the behavior policy does not diverge from the target GVF policies, we added a KL regularizer between the behavior policy and the target policies associated with each GVF.

To calculate the MSE between the true GVF values and the outputs of the Q-critic network, we need accurate estimates of the true Q-value function. Since obtaining the exact Q-values is infeasible, we approximated using Monte Carlo (MC) methods. We rolled out $100$ episodes to estimate the Q-value $Q(s,a)$ for a fixed set of sampled states. We chose a set of $50$ states from the environment to perform this estimation. The MSE was then computed by comparing the Q-values produced by the learned Q-network to these MC estimates.

We used TD3 to train target policies for each GVF and selected mid-level performing policies. This setup allows the behavior policy to efficiently gather data for parallel GVF estimation.

In summary, \Ours/ scales to continuous action domains using SAC with entropy-based exploration, and it effectively estimates GVFs in Mujoco environments. \cref{fig:mujoco_env} shows that \Ours/ lowers MSE more effectively than baselines \RR/ and \Uni/.

\begin{algorithm}[h]
\DontPrintSemicolon
\SetAlgoLined
    \KwIn{Target policies $\pi_{i \in \{1, \dots, n\}}$, initialized behavior policy $\mu_\phi$, replay buffer $\cD$, primary networks $Q$ with $\theta_1,\theta_2$, primary $M$ variance with $w_1, w_2$, target networks $Q_{\bar\theta_1}, Q_{\bar\theta_2}$, $M_{\bar w_1},M_{\bar w_2}$, learning rates $\alpha_Q$, $\alpha_M$, mini-batch size $b$, entropy coefficient $\alpha$, update frequencies $p$, $m$, $l$, target entropy $\bar H$, training steps $K$}
    
    \For{environment step $k = 1, \dots, K$}{
        Select action $a \sim \mu_\phi(\cdot|s)$\;
        Observe next state $s'$ and cumulants $c$\;
        Store transition $(s, a, s', c)$ in replay buffer $\cD$\;

        \If{$step \% p == 0$}{
            Sample mini-batch $\cD \sim (s, a, s',c)$\;
            \blue{//Q-critic update}\; 
            Compute $Q_{tar}(s, a) = c + \gamma \left(\min_{d=1,2}Q_{\bar \theta_d}(s', a'\sim \pi_i(\cdot|s')) - \alpha \log \mu_\phi(\bar a|s') \right)$, \quad $\bar a \sim \mu_\phi(\cdot|s')$\;
            Update $Q_\theta$ with MSE loss: $(Q_{tar} - Q_{\theta_d}(s, a))^2$ for $d=1,2$\;
            \blue{//Compute TD error}\;
            $\delta_Q = Q_{tar} - \min_{d=1,2} Q_{\theta_d}(s, a)$\;
            \blue{//Variance-critic update}\;
            Compute $M_{tar}(s, a) = \delta_Q^2 + \gamma^2 \left(\min_{d=1,2} M_{\bar w_d}(s', a'\sim \pi_i(\cdot|s')) - \alpha \log \mu_\phi(\bar a|s') \right)$ , \quad $\bar a \sim \mu_\phi(\cdot|s')$\;
            Update $M_w$ with MSE loss: $(M_{tar} - M_{w_d}(s_t, a_t))^2$ for $d=1,2$\;
        }
        
        \If{$step \% l == 0$}{
            Update target networks: $\bar \theta_d = \theta_d$, $\bar w_d = w_d$\;
        }
        \If{$step \% m == 0$}{
        \blue{// Update behavior policy $\mu_\phi$}\;
        Update $\phi$ using $\nabla_\phi \sum_{s\sim \cD} \left( \min_{d=1,2} M_{w_d}(s, \bar a) - \alpha \log \mu_\phi(\bar a| s) \right)$,\;
        where $\bar a$ is sampled from $\mu_\phi(\cdot|s)$.\;
        \blue{//Update $\alpha$ entropy regularizer}\;
        Update $\alpha$ with loss $(-\alpha\log\mu_\phi(\cdot|s) + \bar H)$\;
        }
        
     }
    \textbf{Returns} Estimated GVF values $Q_{\theta}^i(s,\cdot)$ for $i = \{1, \dots, n\}$\;
\caption{SAC-based \Ours/}\label{algo:sac_gvf}
\end{algorithm}

%%%%%%%%%%%%%%%%%%%%%%%%%%%%%%%%%%%%%%%%%%%%%%%%%%%%%%%%%%%%
\FloatBarrier

\end{document}